\documentclass{article}

     \PassOptionsToPackage{round}{natbib}


\usepackage[final]{neurips_2019}

\usepackage{mystyle}
\usepackage{mystylebis}
\usepackage{wrapfig}

\title{Universal Invariant and Equivariant\\Graph Neural Networks}

%

\author{%
  Nicolas~Keriven \\
  \'Ecole Normale Sup\'erieure\\
  Paris, France\\
  \texttt{nicolas.keriven@ens.fr} \\
  \And
  Gabriel~Peyré\\
CNRS and \'Ecole Normale Sup\'erieure\\
  Paris, France\\
  \texttt{gabriel.peyre@ens.fr} \\
}

\begin{document}

\maketitle


\begin{abstract}
Graph Neural Networks (GNN) come in many flavors, but should always be either \emph{invariant} (permutation of the nodes of the input graph does not affect the output) or \emph{equivariant} (permutation of the input permutes the output). In this paper, we consider a specific class of invariant and equivariant networks, for which we prove new universality theorems.
More precisely, we consider networks with a single hidden layer, obtained by summing channels formed by applying an equivariant linear operator, a pointwise non-linearity, and either an invariant or equivariant linear output layer. 
Recently,~\cite{Maron2019a} showed that by allowing higher-order tensorization inside the network, universal \emph{invariant} GNNs can be obtained. As a first contribution, we propose an alternative proof of this result, which relies on the Stone-Weierstrass theorem for algebra of real-valued functions. 
Our main contribution is then an extension of this result to the \emph{equivariant} case, which appears in many practical applications but has been less studied from a theoretical point of view. The proof relies on a new generalized Stone-Weierstrass theorem for algebra of equivariant functions, which is of independent interest.
Additionally, unlike many previous works that consider a fixed number of nodes, our results show that a GNN defined by a single set of parameters can approximate uniformly well a function defined on graphs of varying size.
\end{abstract}

\section{Introduction}

Designing Neural Networks (NN) to exhibit some \emph{invariance} or \emph{equivariance} to group operations is a central problem in machine learning~\citep{shawe1993symmetries}. 
Among these, Graph Neural Networks (GNN) are primary examples that have gathered a lot of attention for a large range of applications. Indeed, since a graph is not changed by permutation of its nodes, GNNs must be either \emph{invariant to permutation}, if they return a result that must not depend on the representation of the input, or \emph{equivariant to permutation}, if the output must be permuted when the input is permuted, for instance when the network returns a \emph{signal over the nodes} of the input graph. In this paper, we examine universal approximation theorems for invariant and equivariant GNNs.

From a theoretical point of view, invariant GNNs have been much more studied than their equivariant counterpart (see the following subsection). However, many practical applications deal with equivariance instead, such as community detection \citep{Chen2018a}, recommender systems \citep{Ying2018}, interaction networks of physical systems \citep{Battaglia2016}, state prediction \citep{Sanchez-Gonzalez2018}, protein interface prediction \citep{Fout2017}, among many others. See \citep{Zhou2018, Bronstein2017} for thorough reviews. It is therefore of great interest to increase our understanding of equivariant networks, in particular, by extending arguably one of the most classical result on neural networks, namely the universal approximation theorem for multi-layers perceptron (MLP) with a single hidden layer \citep{Cybenko1989, Hornik1989, Pinkus1999}.


\cite{Maron2019a} recently proved that certain \emph{invariant} GNNs were universal approximators of invariant continuous functions on graphs. The main goal of this paper is to extend this result to the \emph{equivariant} case, for similar architectures.

\paragraph{Outline and contribution.} The outline of our paper is as follows. After reviewing previous works and notations in the rest of the introduction, in Section \ref{sec:inv} we provide an alternative proof of the result of \citep{Maron2019a} for invariant GNNs (Theorem \ref{thm:main_invariant}), which will serve as a basis for the equivariant case. It relies on a non-trivial application of the classical Stone-Weierstrass theorem for algebras of real-valued functions (recalled in Theorem~\ref{thm:SW}). Then, as our main contribution, in Section \ref{sec:equi} we prove this result for the equivariant case (Theorem \ref{thm:main_equivariant}), which to the best of our knowledge was not known before. The proof relies on a new version of Stone-Weierstrass theorem (Theorem \ref{thm:SWequi}). Unlike many works that consider a fixed number of nodes $n$, in both cases we will prove that a GNN described by a single set of parameters can approximate uniformly well a function that acts on graphs of varying size.




\subsection{Previous works}

The design of neural network architectures which are equivariant or invariant under group actions is an active area of research, see for instance~\citep{ravanbakhsh2017equivariance,gens2014deep,cohen2016group} for finite groups and~\citep{wood1996representation,kondor2018generalization} for infinite groups. We focus here our attention to discrete groups acting on the coordinates of the features, and more specifically to the action of the full set of permutations on tensors (order-1 tensors corresponding to sets, order-2 to graphs, order-3 to triangulations, etc).   

\paragraph{Convolutional GNN.}
The most appealing construction of GNN architectures is through the use of local operators acting on vectors indexed by the vertices. Early definitions of these ``message passing'' architectures rely on fixed point iterations~\citep{Scarselli2009}, while more recent constructions make use of non-linear functions of the adjacency matrix, for instance using  spectral decompositions~\citep{Bruna2013} or polynomials~\citep{Defferrard2016}. We refer to~\citep{Bronstein2017,Xu2018} for recent reviews. For regular-grid graphs, they match classical convolutional networks~\citep{lecun1989backpropagation} which by design can only approximate translation-invariant or equivariant functions~\citep{Yarotsky2018}. It thus comes at no surprise that these convolutional GNN are not universal approximators~\citep{Xu2018} of permutation-invariant functions. 

\paragraph{Fully-invariant GNN.}
Designing Graph (and their higher-dimensional generalizations) NN which are equivariant or invariant to the whole permutation group (as opposed to e.g. only translations) requires the use of a small sub-space of linear operators, which is identified in~\citep{Maron2019}. This generalizes several previous constructions, for instance for sets~\citep{zaheer2017deep,hartford2018deep} and points clouds~\citep{qi2017pointnet}. Universality results are known to hold in the special cases of sets, point clouds~\citep{qi2017pointnet} and discrete measures~\citep{deBie19} networks.

In the \emph{invariant} GNN case, the universality of architectures built using a single hidden layer of such equivariant operators followed by an invariant layer is proved in~\citep{Maron2019a} (see also~\citep{kondor2018covariant}). This is the closest work from our, and we will provide an alternative proof of this result in Section \ref{sec:inv}, as a basis for our main result in Section \ref{sec:equi}.

Universality in the equivariant case has been less studied. Most of the literature focuses on equivariance to \emph{translation} and its relation to convolutions \citep{kondor2018covariant,cohen2016group}, which are ubiquitous in image processing. In this context, \cite{Yarotsky2018} proved the universality of some translation-equivariant networks. Closer to our work, universality of NNs equivariant to permutations acting on point clouds has been recently proven in \citep{Sannai2019}, however their theorem does not allow for high-order inputs like graphs. 
It is the purpose of our paper to fill this missing piece and prove the universality of a class of equivariant GNNs for high-order inputs such as (hyper-)graphs. 


\subsection{Notations and definitions}

\paragraph{Graphs.} In this paper, (hyper-)graphs with $n$ nodes are represented by tensors $G \in \RR^{n^\kinput}$ indexed by $1\leq i_1, \ldots, i_\kinput \leq n$. For instance, ``classical'' graphs are represented by edge weight matrices ($\kinput=2$), and hyper-graphs by high-order tensors of ``multi-edges'' connecting more than two nodes. Note that we do not impose $G$ to be symmetric, or to contain only non-negative elements.
In the rest of the paper, we fix some $\kinput \geq 1$ for the order of the inputs, however we allow $n$ to vary.

\paragraph{Permutations.} Let $[n]\eqdef \{1,\ldots,n\}$. The set of permutations $\permut:[n] \to [n]$ (bijections from $[n]$ to itself) is denoted by $\Permut_n$, or simply $\Permut$ when there is no ambiguity. Given a permutation $\permut$ and an order-$k$ tensor $G \in \RR^{n^k}$, a ``permutation of nodes'' on $G$ is denoted by $\permut\star G$ and defined as
\[
(\permut \star G)_{\permut(i_1),\ldots,\permut(i_k)} = G_{i_1,\ldots,i_k}.
\]
We denote by $P_\permut \in \{0,1\}^{n\times n}$ the permutation matrix corresponding to $\permut$, or simply $P$ when there is no ambiguity. For instance, for $G\in\RR^{n^2}$ we have $\permut \star G = P G P^\top$.


Two graphs $G_1, G_2$ are said isomorphic if there is a permutation $\permut$ such that $G_1 = \permut \star  G_2$. If $G = \permut \star G$, we say that $\permut$ is a self-isomorphism of $G$. Finally, we denote by $\Permut(G) \eqdef \enscond{\permut \star G}{\permut \in \Permut}$ the orbit of all the permuted versions of $G$.

\paragraph{Invariant and equivariant linear operators.} A function $f:\RR^{n^k}\to \RR$ is said to be \emph{invariant} if $f(\permut\star G) = f(G)$ for every permutation $\permut$. A function $f:\RR^{n^k} \to \RR^{n^\ell}$ is said to be \emph{equivariant} if $f(\permut\star G) = \permut \star f(G)$. Our construction of GNNs alternates between \emph{linear} operators that are invariant or equivariant to permutations, and non-linearities. \cite{Maron2019} elegantly characterize all such linear functions, 
and prove that they live in vector spaces of dimension, respectively, exactly $b(k)$ and $b(k+\ell)$, where $b(i)$ is the $i^{th}$ Bell number. 
An important corollary of this result is that the dimension of this space \emph{does not depend on the number of nodes $n$}, but only on the order of the input and output tensors. Therefore one can parameterize linearly for all $n$ such an operator by the same set of coefficients. For instance, a linear equivariant operator $F:\RR^{n^2} \to \RR^{n^2}$ from matrices to matrices is formed by a linear combination of $b(4) = 15$ basic operators such as ``sum of rows replicated on the diagonal'', ``sum of columns replicated on the rows'', and so on. The $15$ coefficients used in this linear combination define the ``same'' linear operator for every $n$. %

\paragraph{Invariant and equivariant Graph Neural Nets.}  As noted by \cite{Yarotsky2018}, it is in fact trivial to build invariant universal networks for finite groups of symmetry: just take a non-invariant universal architecture, and perform a group averaging. However, this holds little interest in practice, since the group of permutation is of size $n!$. Instead, researchers use architectures for which invariance is hard-coded into the construction of the network itself. The same remark holds for equivariance.

In this paper, we consider one-layer GNNs of the form: 
\begin{equation}\label{eq:GNN}
f(G) = \sum_{s=1}^S H_s\Big[\rho(F_{s}[G] + B_{s})\Big] + b,
\end{equation}
where $F_s:\RR^{n^\kinput} \to \RR^{n^{k_s}}$ are linear equivariant functions that yield $k_s$-tensors (i.e. they \emph{potentially increase or decrease the order of the input tensor}), and $H_s$ are invariant linear operators $H_s:\RR^{n^{k_s}} \to \RR$ (resp. equivariant linear operators $H_s:\RR^{n^{k_s}} \to \RR^n$), such that the GNN is globally invariant (resp. equivariant). The invariant case is studied in Section \ref{sec:inv}, and the equivariant in Section \ref{sec:equi}. The bias terms $B_s \in \RR^{n^{k_s}}$ are equivariant, so that $B_s = \permut\star B_s$ for all $\permut$. They are also characterized by \cite{Maron2019} and belong to a linear space of dimension $b(k_s)$. We illustrate this simple architecture in Fig. \ref{fig:GNN}.

In light of the characterization by \cite{Maron2019} of linear invariant and equivariant operators described in the previous paragraph, a GNN of the form \eqref{eq:GNN} is described by $1+\sum_{s=1}^S b(k_s+\kinput) + 2b(k_s)$ parameters in the invariant case and $1+\sum_{s=1}^S b(k_s+\kinput) + b(k_s+1) + b(k_s)$ in the equivariant. As mentioned earlier, this number of parameters does not depend on the number of nodes $n$, and a GNN described by a single set of parameters can be applied to graphs of any size. In particular, we are going to show that a GNN approximates uniformly well a continuous function for several $n$ at once. 

The function $\rho$ is any locally Lipschitz pointwise non-linearity for which the Universal Approximation Theorem for MLP applies. We denote their set $\Ff_\text{MLP}$. This includes in particular any continuous function that is not a polynomial \citep{Pinkus1999}. Among these, we denote the sigmoid $\rho_\text{sig}(x) = e^x/(1+e^x)$. 

We denote by $\Nninv(\rho)$ (resp. $\Nneq(\rho)$) the class of invariant (resp. equivariant) 1-layer networks of the form \eqref{eq:GNN} (with $S$ and $k_s$ being arbitrarily large). 
Our contributions show that they are dense in the spaces of continuous invariant (resp. equivariant) functions.

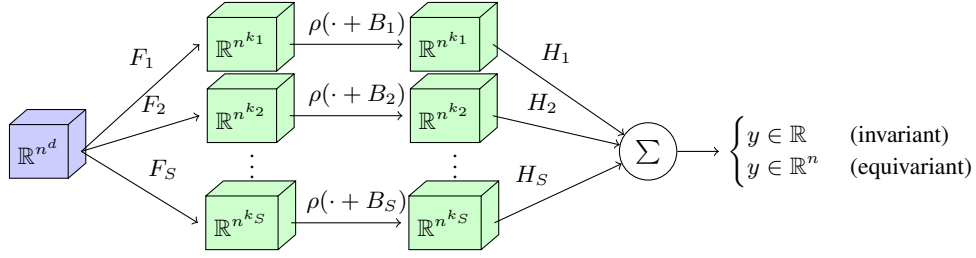
\begin{figure}
\centering
\def \hl {2.7cm}
\def \vl {0.95cm}
\def \mm {0.06}
\def\msize{0.7cm}
\tikzstyle{graphmat}=[draw, rectangle, minimum width=\msize, minimum height=\msize, fill=blue!20]
\tikzstyle{graphtensor}=[parallelepiped,draw,fill=green!20,
  minimum width=\msize, minimum height=\msize, parallelepiped offset x=\msize/3]
\begin{tikzpicture}
\small
\node[graphtensor, fill=blue!20] (G) at (0,0) {$\RR^{n^\kinput}$};

\node[graphtensor] (G11) at (\hl,1.5*\vl) {$\RR^{n^{k_1}}$};
\node[graphtensor] (G12) at (\hl,0.5*\vl) {$\RR^{n^{k_2}}$};
\node[] (dots1) at (1.05*\hl,-0.1*\vl) {\vdots};
\node[graphtensor] (G13) at (\hl,-1*\vl) {$\RR^{n^{k_S}}$};
\draw[->] ([shift={(5pt,0)}]G.east) -- ([shift={(-3pt,0)}]G11.west) node[pos=0.7, auto] {$F_1$};
\draw[->] ([shift={(5pt,0)}]G.east) -- ([shift={(-3pt,0)}]G12.west) node[pos=0.8, auto] {$F_2$};
\draw[->] ([shift={(5pt,0)}]G.east) -- ([shift={(-3pt,0)}]G13.west) node[midway, auto] {$F_S$};

\node[graphtensor] (G21) at (2*\hl,1.5*\vl) {$\RR^{n^{k_1}}$};
\node[graphtensor] (G22) at (2*\hl,0.5*\vl) {$\RR^{n^{k_2}}$};
\node[] (dots2) at (2.05*\hl,-0.1*\vl) {\vdots};
\node[graphtensor] (G23) at (2*\hl,-1*\vl) {$\RR^{n^{k_S}}$};
\draw[->] ([shift={(5pt,0)}]G11.east) -- ([shift={(-3pt,0)}]G21.west) node[pos=0.6, auto] {$\rho(\cdot + B_1)$};
\draw[->] ([shift={(5pt,0)}]G12.east) -- ([shift={(-3pt,0)}]G22.west) node[pos=0.6, auto] {$\rho(\cdot + B_2)$};
\draw[->] ([shift={(5pt,0)}]G13.east) -- ([shift={(-3pt,0)}]G23.west) node[pos=0.6, auto] {$\rho(\cdot + B_S)$};

\node[draw, circle, minimum width=\msize/3, minimum height=\msize/3] (plus) at (3*\hl,0) {$\sum$};
\draw[->] ([shift={(5pt,0)}]G21.east) -- (plus) node[pos=0.3, auto] {$H_1$};
\draw[->] ([shift={(5pt,0)}]G22.east) -- (plus) node[pos=0.2, auto] {$H_2$};
\draw[->] ([shift={(5pt,0)}]G23.east) -- (plus) node[midway, auto] {$H_S$};

\node[] (result) at (4*\hl, 0) {$\begin{cases} y \in \RR &\text{ (invariant)} \\ y \in \RR^n &\text{ (equivariant)}\end{cases}$};
\draw[->] (plus.east) -- (result.west);

\end{tikzpicture}
\caption{\small The model of GNNs studied in this paper. For each channel $s\leq S$, the input tensor is passed through an equivariant operator $F_s:\RR^{n^{\kinput}}\to \RR^{n^{k_s}}$, a non-linearity with some added equivariant bias $B_s$, and a final operator $H_s$ that is either invariant (Section \ref{sec:inv}) or equivariant (Section \ref{sec:equi}). These GNNs are universal approximators of invariant or equivariant continuous functions (Theorems \ref{thm:main_invariant} and \ref{thm:main_equivariant}).}
\label{fig:GNN}
\end{figure}

\section{The case of invariant functions}\label{sec:inv}

\cite{Maron2019a} recently proved that \emph{invariant} GNNs similar to \eqref{eq:GNN} are universal approximators of continuous invariant functions. As a warm-up, we propose an alternative proof of (a variant of) this result, that will serve as a basis for our main contribution, the equivariant case (Section \ref{sec:equi}).

\paragraph{Edit distance.} For invariant functions, isomorphic graphs are undistinguishable, and therefore we work with a set of \emph{equivalence classes} of graphs, where two graphs are equivalent if isomorphic. We define such a set for any number $n \leq n_{\max}$ of nodes and bounded $G$
\[
\Gginv \eqdef \enscond{\equi{G}}{G\in\RR^{n^\kinput}~\text{with}~n\leq n_{\max}, \norm{G}\leq R}, 
\]
where we recall that $\equi{G} = \enscond{\permut \star G}{\permut\in \Oo}$ is the set of every permuted versions of $G$, here seen as an equivalence class.

We need to equip this set with a metric that takes into account graphs with different number of nodes. A distance often used in the literature is the \emph{graph edit distance} \citep{sanfeliu1983distance}. It relies on defining a set of elementary operations $o$ and a cost $c(o)$ associated to each of them, here we consider node addition and edge weight modification. The distance is then defined as
\begin{equation}\label{eq:edit}
\dedit(\equi{G_1}, \equi{G_2}) \eqdef \min_{(o_1,\ldots,o_k) \in \Pp(G_1,G_2)} \sum_{i=1}^k c(o_i)
\end{equation}
where $\Pp(G_1,G_2)$ contains every sequence of operation to transform $G_1$ into a graph isomorphic to $G_2$, or $G_2$ into $G_1$. 
Here we consider $c(node\_addition) = c$ for some constant $c>0$, $c(edge\_weight\_change) = \abs{w - w'}$ where the weight change is from $w$ to $w'$, and ``edge'' refers to any element of the tensor $G\in\RR^{n^\kinput}$. 
Note that, if we have $\dedit(\equi{G_1},\equi{G_2}) < c$, then $G_1$ and $G_2$ have the same number of nodes, and in that case 
$
\dedit(\equi{G_1},\equi{G_2}) = \min_{\permut \in \Permut_n} \norm{G_1 - \permut \star G_2}_1, 
$
where $\norm{\cdot}_1$ is the element-wise $\ell_1$ norm, since each edge must be transformed into another. 

We denote by $\Cc(\Gginv,\dedit)$ the space of real-valued functions on $\Gginv$ that are continuous with respect to $\dedit$, equipped with the infinity norm of uniform convergence. We then have the following result.

\begin{theorem}\label{thm:main_invariant}
For any $\rho \in \Ff_\textup{MLP}$, $\Nninv(\rho)$ is dense in $\Cc(\Gginv,\dedit)$.
\end{theorem}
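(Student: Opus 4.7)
The plan is to apply the classical Stone-Weierstrass theorem to the closure $\mathcal{A} \eqdef \overline{\Nninv(\rho)}$ inside $\Cc(\Gginv,\dedit)$. As preliminary observations, $\Gginv$ is compact (each stratum $\Gginv_n$ is the continuous image of the compact ball $\{G\in\RR^{n^\kinput}: \norm{G}\leq R\}$ under the $\Permut_n$-quotient, and the strata are separated by $\dedit \geq c > 0$ across different $n$), and $\mathcal{A}$ automatically contains the constants through the scalar bias $b$ of \eqref{eq:GNN}. The two non-trivial conditions to verify are then (i) closure of $\mathcal{A}$ under pointwise products and (ii) separation of points.

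For (i), the central technical step, I would show that the product of two one-channel GNNs $f_1(G)=H[\rho(F[G]+B)]$ and $f_2(G)=H'[\rho(F'[G]+B')]$, with $F:\RR^{n^\kinput}\to\RR^{n^k}$ and $F':\RR^{n^\kinput}\to\RR^{n^{k'}}$, lies in $\mathcal{A}$ as a single-layer GNN of higher tensor order $k+k'$. Expanding $f_1(G)f_2(G)=\sum_{i,j}H_i H'_j\,\rho(F[G]_i+B_i)\rho(F'[G]_j+B'_j)$ and approximating the continuous bivariate function $(u,v)\mapsto \rho(u)\rho(v)$ on a suitable compact by a univariate-argument MLP $\sum_m c_m \rho(a_m u+b_m v+d_m)$ (Pinkus' theorem), each summand rewrites as a combination of terms $\rho(\tilde F_m[G]_{i,j}+\tilde B_{m,i,j})$ where $\tilde F_m[G]_{i,j}\eqdef a_m F[G]_i+b_m F'[G]_j$ and $\tilde B_{m,i,j}\eqdef a_m B_i+b_m B'_j+d_m$. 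Here $\tilde F_m$ is linear equivariant from $\RR^{n^\kinput}$ to $\RR^{n^{k+k'}}$ (as a linear combination of the given equivariant maps pulled back along index projections), $\tilde B_m$ is permutation-fixed because $B$ and $B'$ are, and the outer product $H\otimes H'$ is permutation-fixed (being the tensor product of two permutation-fixed tensors) and therefore defines an invariant linear form on $\RR^{n^{k+k'}}$. The general product extends by bilinearity.

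For (ii), separation is handled by cases. When $\equi{G_1}$ and $\equi{G_2}$ have different numbers of nodes, a channel with $F\equiv 0$ and a non-zero constant permutation-fixed bias produces the purely combinatorial output $n^{k}\rho(\beta)$ after summation, taking distinct values for distinct $n$. When the two orbits share the same $n$ but are distinct, classical invariant theory (Hilbert/Noether) provides a $\Permut_n$-invariant polynomial separating them, and such polynomials are generated as an algebra by multisymmetric power sums $G\mapsto \sum_i \alpha(G)_i^p$ for equivariant projections $\alpha$ into higher-order tensors and integer powers $p$. Each such power sum is uniformly approximated by a single channel of $\Nninv(\rho)$ using MLP-universality of $\rho$ to approximate $x\mapsto x^p$; by (i) arbitrary polynomial combinations remain in $\mathcal{A}$.

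The hardest part is step (i): beyond the bivariate MLP approximation, one must verify carefully that $\tilde F_m$, $\tilde B_m$ and $H\otimes H'$ fall in the parameter families characterized in~\cite{Maron2019}, so that the approximant is a genuine single-hidden-layer GNN of the form~\eqref{eq:GNN} with the tensor-order bookkeeping $k,k'\mapsto k+k'$ respected. Once step (i) is established, Stone-Weierstrass combined with (ii) immediately yields $\mathcal{A}=\Cc(\Gginv,\dedit)$, proving Theorem~\ref{thm:main_invariant}.
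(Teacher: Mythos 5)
Your overall strategy is the same as the paper's (Stone--Weierstrass applied to the closed subalgebra generated by the networks), but you realize the two hypotheses differently, and the product step is a genuinely different and arguably more economical route. The paper enlarges the class to $\Nninv^\otimes(\rho)$ of the form \eqref{eq:GNNalgebra}, proves it is an exact subalgebra (Lemma \ref{lem:algebra}), and only returns to $\Nninv(\rho)$ at the very end through the Fourier/cosine identities of Lemma \ref{lem:invariantcos}; you instead show directly that $\overline{\Nninv(\rho)}$ is closed under multiplication by approximating $(u,v)\mapsto\rho(u)\rho(v)$ with a two-input MLP in the activation $\rho$ and absorbing $a_m F[G]_i+b_m F'[G]_j$, $a_m B_i+b_m B'_j+d_m$ and the outer form $H\otimes H'$ into a single channel of order $k+k'$. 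This is sound: the invariance/equivariance checks are exactly those appearing in the paper's Lemma \ref{lem:algebra}, and the error is controlled by $\sum_{i,j}\abs{H_iH'_j}$, which is finite since $n\leq n_{\max}$. It makes the cosine detour unnecessary. Two points should still be made explicit: continuity of GNNs with respect to $\dedit$ (so that $\overline{\Nninv(\rho)}\subset\Cc(\Gginv,\dedit)$), which the paper gets from local Lipschitzness plus invariance; and the fact that $\tilde F_m$, $\tilde B_m$ and $H\otimes H'$ are given by coefficients independent of $n$ (true, because your constructions send basis operators to fixed combinations of basis operators), so that a single parameter set serves all $n\leq n_{\max}$, which is the point of the theorem.

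The genuine gap is in your separation step (ii). The first half (invariant polynomials separate orbits of a finite group) is standard. The second half --- that the invariant ring of the diagonal $\Permut_n$-action on $\RR^{n^\kinput}$ is generated by power sums $\sum_{i\in[n]^k}\alpha(G)_i^p$ of linear equivariant images $\alpha$ --- is not a classical Hilbert/Noether statement for $\kinput\geq 2$: multisymmetric power sums cover the action of $\Permut_n$ on $n$ points of $\RR^d$ (i.e.\ $\kinput=1$), whereas for tensor inputs the invariants are spanned by general contraction patterns $\sum_{u\in[n]^k}\prod_t E_{\beta_t}(G)_u$, and reducing those to your power sums requires a polarization argument (expand $\sum_u\big(\sum_\beta c_\beta E_\beta(G)_u\big)^p$ and extract the mixed coefficients). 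That reduction can be carried out, but it is essentially the technical content of \citep{Maron2019a}'s proof, which the present paper deliberately avoids: its separation argument uses sigmoid thresholding to match the number of nodes and the multiset of entries, then the permanent-like invariants $\sum_{\permut}\prod a_{\permut(i_1),\ldots,\permut(i_\kinput)}^{k_{i_1,\ldots,i_\kinput}}$ together with the combinatorial Lemma \ref{lem:lemmapermut}, an argument that also transfers to the self-separability needed in the equivariant case. So as written your step (ii) rests on an unproved (though correct) algebraic claim; either spell out the polarization lemma or cite Maron et al.'s construction of the invariant generators, after which your proof goes through.
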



\paragraph{Comparison with \citep{Maron2019a}.} 
A variant of Theorem \ref{thm:main_invariant} was proved in \citep{Maron2019a}. The two proofs are however different: their proof relies on the construction of a basis of invariant polynomials and on classical universality of MLPs, while our proof is a direct application of Stone-Weierstrass theorem for algebras of real-valued functions. See the next subsection for details.

One improvement of our result with respect to the one of \citep{Maron2019a} is that it can handle graphs of varying sizes. As mentioned in the introduction, a single set of parameters defines a GNN that can be applied to graphs of any size. Theorem \ref{thm:main_invariant} shows that any continuous invariant function is \emph{uniformly} well approximated by a GNN on the whole set $\Gginv$, that is, for all numbers of nodes $n \leq n_{\max}$ simultaneously. On the contrary, \cite{Maron2019a} work with a fixed $n$, and it does not seem that their proof can extend easily to encompass several $n$ at once.
A weakness of our proof is that it does not provide an upper bound on the order of tensorization $k_s$. Indeed, through Noether's theorem on polynomials, the proof of \cite{Maron2019a} shows that $k_s \leq n^\kinput(n^\kinput - 1)/2$ is sufficient for universality, which we cannot seem to deduce from our proof. Moreover, they provide a lower-bound $k_s \geq n^\kinput$ below which universality cannot be achieved.

\subsection{Sketch of proof of Theorem~\ref{thm:main_invariant}}

The proof for the invariant case will serve as a basis for the equivariant case in the Section~\ref{sec:equi}. It relies on Stone-Weierstrass theorem, which we recall below.

\begin{theorem}[Stone-Weierstrass (\cite{Rudin1991}, Thm. 5.7)]\label{thm:SW}
Suppose $X$ is a compact Hausdorff space and $A$ is a subalgebra of the space of continuous real-valued functions $\Cc(X)$ which contains a non-zero constant function. Then $\Aa$ is dense in $\Cc(X)$ if and only if it separates points, that is, for all $x \neq y$ in $X$ there exists $f \in \Aa$ such that $f(x) \neq f(y)$.
\end{theorem}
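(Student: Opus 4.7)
The plan is to apply Stone--Weierstrass (Theorem~\ref{thm:SW}) to the uniform closure $\Aa \eqdef \overline{\Nninv(\rho)}$ inside $\Cc(\Gginv,\dedit)$. Four hypotheses must be verified: (i) $(\Gginv,\dedit)$ is a compact Hausdorff space; (ii) $\Aa$ is a subalgebra; (iii) $\Aa$ contains a non-zero constant; (iv) $\Aa$ separates points. Once these hold, Theorem~\ref{thm:SW} directly delivers the density claim.

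Hypotheses (i) and (iii) are quick. $\Gginv$ decomposes as a finite disjoint union over $n\le n_{\max}$, and on each slice $\dedit$ coincides with the quotient metric $\min_{\permut\in\Permut_n}\norm{G_1-\permut\star G_2}_1$ (by the threshold observation following~\eqref{eq:edit}); each slice is thus the quotient of a compact Euclidean ball by the finite group $\Permut_n$, hence compact, and Hausdorffness follows from $\dedit$ being a metric. A constant GNN $f\equiv b\ne 0$ witnesses (iii). For point separation (iv), I would treat two cases. If $\equi{G_1}$ and $\equi{G_2}$ have different sizes $n_1\ne n_2$, the one-channel GNN with $F=0$, constant equivariant bias $B\equiv c$ (for $c$ such that $\rho(c)\ne 0$), and total-sum invariant contraction $H$ evaluates to $n^{\kinput}\rho(c)$, which separates them. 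If $n_1=n_2=n$ but $\equi{G_1}\ne\equi{G_2}$, I invoke classical invariant theory to pick an invariant polynomial $p$ with $p(G_1)\ne p(G_2)$, and then use the algebra property (ii) below: $\Aa$ contains every linear invariant functional $\ell(G)=\sum_I c_I G_I$ (realized by a channel in which the scalar identity $\phi(x)=x$ is approximated by $\sum_k a_k\rho(\lambda_k x+c_k)$ via universal MLP approximation), and closing under products yields the separating $p$.

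The main obstacle is (ii), closure of $\Aa$ under multiplication. Consider two elementary channels $f_i(G)=H_i[\rho(F_i[G]+B_i)]$ of orders $k_1,k_2$, and set $X=F_1[G]+B_1$, $Y=F_2[G]+B_2$, so
\[
f_1(G)f_2(G)=\sum_{I_1,I_2}(H_1)_{I_1}(H_2)_{I_2}\,\rho(X_{I_1})\,\rho(Y_{I_2}).
\]
I would realize this by a single channel of order $k_1+k_2$: the linear equivariant map $\tilde F[G]_{(I_1,I_2)}=\alpha F_1[G]_{I_1}+\beta F_2[G]_{I_2}$, the equivariant bias $\tilde B_{(I_1,I_2)}=\alpha B_{1,I_1}+\beta B_{2,I_2}+c$, and the invariant contraction $\tilde H=H_1\otimes H_2$ (the tensor product of two invariant contractions is invariant under the diagonal permutation action on the combined tensor). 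The universal approximation theorem for MLPs with activation $\rho\in\Ff_{\text{MLP}}$, applied to the bivariate continuous function $(x,y)\mapsto\rho(x)\rho(y)$ on the bounded image of $\Gginv$ under $(X,Y)$, yields $\rho(x)\rho(y)\approx\sum_k a_k\rho(\alpha_k x+\beta_k y+c_k)$; summing the corresponding channels and applying $\tilde H$ reproduces $f_1f_2$ up to arbitrarily small uniform error on $\Gginv$. Closure of $\Aa$ under sum and scalar multiplication is immediate from the channel summation in~\eqref{eq:GNN}, so $\Aa$ is a subalgebra. The delicate sub-point is that $\tilde F$, $\tilde B$, $\tilde H$ must be legal operators with parameterizations independent of $n$, which is ensured by the basis characterization of linear invariant and equivariant maps in~\cite{Maron2019}.
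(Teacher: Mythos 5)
The statement you were handed is the classical Stone--Weierstrass theorem (Theorem~\ref{thm:SW}), which the paper simply cites from Rudin and does not prove; what you have actually written is a proof sketch for Theorem~\ref{thm:main_invariant}. I will evaluate it as such.

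Your subalgebra argument is a genuine and attractive alternative to the paper's. The paper avoids asking $\Nninv(\rho)$ itself to be an algebra: it introduces the auxiliary class $\Nninv^\otimes$ with explicit Kronecker products, proves Stone--Weierstrass density for $\Nninv^\otimes(\rho_{\text{sig}})$, and only afterwards returns to $\Nninv(\rho)$ through the Fourier/$\cos$ chain of Lemma~\ref{lem:invariantcos}. You instead apply Stone--Weierstrass directly to $\overline{\Nninv(\rho)}$ and obtain closure under products by MLP-approximating the bivariate map $(x,y)\mapsto\rho(x)\rho(y)$ on a compact set, noting that $\tilde F[G]_{(I_1,I_2)}=\alpha F_1[G]_{I_1}+\beta F_2[G]_{I_2}$ is equivariant, the analogous bias is invariant under $\permut\star$, and $H_1\otimes H_2$ is an invariant contraction (and, since the Bell-number basis elements are indexed by set partitions and the tensor product of two such elements is the disjoint-union partition element, its coefficients remain $n$-independent). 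This cleanly collapses the paper's Lemmas~\ref{lem:algebra} and~\ref{lem:invariantcos} into one step, at the modest cost of using the multivariate rather than univariate universal approximation theorem.

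Your separation argument, however, has a real gap. You claim that $\Aa$ contains every linear invariant functional $\ell(G)=H[G]$ and that closing under products then yields any invariant polynomial $p$. But the algebra generated by the linear invariant functionals is in general a strict subalgebra of all invariant polynomials on $\RR^{n^\kinput}$. Already for $\kinput=1$ and $n=2$ the only linear invariant (up to scalar) is $e_1(x_1,x_2)=x_1+x_2$, and the algebra it generates consists of univariate polynomials in $e_1$; it fails to separate $G_1=(1,1)$ from $G_2=(0,2)$ even though they are not in the same orbit. The same failure persists for every $\kinput$: the paper's Lemma~\ref{lem:separation} is needed precisely because separating orbits requires the nonlinearity $\rho$ applied \emph{before} the contraction (e.g.\ the sigmoid-threshold trick that recovers the multiset of entries of $G$) together with tensor powers $\rho_{\text{sig}}(G)^{\otimes k}$, and the paper then invokes Lemma~\ref{lem:lemmapermut} to extract an actual permutation. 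Note that $\Aa$ \emph{does} contain the richer functions $\prod_j H_j[\rho(F_j[G]+B_j)]$, which are exactly what the paper's $\Nninv^\otimes$ exploits, so your separation step would go through if you replaced the linear-invariants-and-polynomials route by an argument along the lines of Lemma~\ref{lem:separation}; as written, though, the route via products of linear invariants does not separate points.

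A minor point: you should also record that any two isomorphic $G,G'$ satisfy $f(\equi{G})=f(\equi{G'})$ for all $f\in\Aa$ (invariance), so that Stone--Weierstrass is indeed being applied on the quotient space $\Gginv$ and not on the ambient Euclidean balls; this is implicit in the paper's setup but worth stating once.
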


We will construct a class of GNNs that satisfy all these properties in $\Gginv$. As we will see, unlike classical applications of this theorem to e.g. polynomials, the main difficulty here will be to prove the separation of points. We start by observing that $\Gginv$ is indeed a compact set for $\dedit$.

\paragraph{Properties of $(\Gginv,\dedit)$.} Let us first note that the metric space $(\Gginv,\dedit)$ is Hausdorff (i.e. separable, all metric spaces are). For each $\equi{G_1},\equi{G_2}\in \Gginv$ we have: if $\dedit(\equi{G_1},\equi{G_2})< c$, then the graphs have the same number of nodes, and in that case $\dedit(\equi{G_1}\equi{G_2}) \leq \norm{G_1-G_2}_1$. Therefore, the embedding $G \mapsto \equi{G}$ is continuous (locally Lipschitz). As the continuous image of the compact
$\bigcup_{n=1}^{n_{\max}}\enscond{G\in \RR^{n^\kinput}}{\norm{G}\leq R}$, 
the set $\Gginv$ is indeed compact.

\paragraph{Algebra of invariant GNNs.} Unfortunately, $\Nninv(\rho)$ is not a subalgebra. Following \cite{Hornik1989}, we first need to extend it to be closed under multiplication. We do that by allowing Kronecker products inside the invariant functions:
\begin{equation}\label{eq:GNNalgebra}
f(G) = \sum_{s=1}^S H_s\Big[\rho\pa{F_{s1}[G] + B_{s1}} \otimes \ldots \otimes \rho\pa{F_{s T_s}[G] + B_{s T_s}}\Big] + b
\end{equation}
where $F_{st}$ yields $k_{st}$-tensors, $H_s:\RR^{n^{\sum_t k_{st}}}\to \RR$ are invariant, and $B_{st}$ are equivariant bias. By $(\permut\star G)\otimes (\permut\star G') = \permut\star (G\otimes G')$, they are indeed invariant. We denote by $\Nninv^\otimes(\rho)$ the set of all GNNs of this form, with $S, T_s, k_{st}$ arbitrarily large.
\begin{lemma}\label{lem:algebra}
For any locally Lipschitz $\rho$, $\Nninv^\otimes(\rho)$ is a subalgebra in $\Cc(\Gginv, \dedit)$.
\end{lemma}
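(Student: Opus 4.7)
I plan to verify the three requirements for $\Nninv^\otimes(\rho)$ to be a subalgebra of $\Cc(\Gginv,\dedit)$: (i) every element is a continuous invariant function on $\Gginv$, (ii) the set is a vector space, and (iii) the set is closed under pointwise multiplication. Invariance of elements is immediate from $(\permut\star G)\otimes (\permut\star G') = \permut\star(G\otimes G')$ together with the invariance of $H_s$, as already noted in the statement.

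For continuity, it suffices to show continuity on each fibre of fixed size $n$, since for any $\delta<c$ the ball of radius $\delta$ around $\equi{G}$ in $\dedit$ contains only classes $\equi{G'}$ with the same number of nodes, on which $\dedit(\equi{G},\equi{G'}) = \min_{\permut}\|G-\permut\star G'\|_1$. On $\RR^{n^\kinput}$ the map \eqref{eq:GNNalgebra} is a finite composition of linear operators, a locally Lipschitz $\rho$, and tensor products, hence is continuous; combined with invariance, it descends to a continuous function on $\Gginv$ for $\dedit$.

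For the vector space structure, given two elements $f_1,f_2\in \Nninv^\otimes(\rho)$ written as in \eqref{eq:GNNalgebra} with parameters $(S^{(i)}, T^{(i)}_s, F^{(i)}_{st}, B^{(i)}_{st}, H^{(i)}_s, b^{(i)})$, the sum $f_1+f_2$ is obtained by concatenating the two index sets of channels, and scalar multiplication by $\lambda$ can be absorbed into each $H_s$ together with the constant $b$. The non-trivial point is multiplicative closure. Write $f_i(G) = \sum_{s} H^{(i)}_s[T^{(i)}_s(G)] + b^{(i)}$, where $T^{(i)}_s(G) \eqdef \rho(F^{(i)}_{s1}[G]+B^{(i)}_{s1}) \otimes \cdots \otimes \rho(F^{(i)}_{sT^{(i)}_s}[G]+B^{(i)}_{sT^{(i)}_s})$. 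Expanding,
\[
f_1(G)f_2(G) = \sum_{s_1,s_2} H^{(1)}_{s_1}[T^{(1)}_{s_1}(G)]\cdot H^{(2)}_{s_2}[T^{(2)}_{s_2}(G)] + b^{(2)}\sum_{s_1} H^{(1)}_{s_1}[T^{(1)}_{s_1}(G)] + b^{(1)}\sum_{s_2} H^{(2)}_{s_2}[T^{(2)}_{s_2}(G)] + b^{(1)}b^{(2)}.
\]
The mixed terms are handled using the key identity: if $H:\RR^{n^k}\to\RR$ and $H':\RR^{n^{k'}}\to\RR$ are invariant linear, then the map $H\otimes H': T\otimes T' \mapsto H[T]\cdot H'[T']$ extends uniquely to an invariant linear map $\RR^{n^{k+k'}}\to \RR$ (invariance follows from $\permut\star(T\otimes T') = (\permut\star T)\otimes(\permut\star T')$). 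Applying this with $T = T^{(1)}_{s_1}(G)$ and $T' = T^{(2)}_{s_2}(G)$, we get $H^{(1)}_{s_1}[T^{(1)}_{s_1}(G)]\cdot H^{(2)}_{s_2}[T^{(2)}_{s_2}(G)] = (H^{(1)}_{s_1}\otimes H^{(2)}_{s_2})[T^{(1)}_{s_1}(G)\otimes T^{(2)}_{s_2}(G)]$, and the combined tensor is exactly a single tensor-product term of the form appearing in \eqref{eq:GNNalgebra} with $T^{(1)}_{s_1}+T^{(2)}_{s_2}$ factors. The pure $b^{(i)}$-times-one-sum terms are elements of $\Nninv^\otimes(\rho)$ already, as is the constant $b^{(1)}b^{(2)}$. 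Hence $f_1f_2\in\Nninv^\otimes(\rho)$.

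The only step requiring care is the tensor-product identity for invariant linear maps used in the multiplicative closure; once that is in place, everything else is bookkeeping. I would therefore begin by stating and proving that identity explicitly, then organize the rest of the verification around it.
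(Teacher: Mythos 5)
Your proposal is correct and follows essentially the same route as the paper: continuity is checked on graphs of equal size where $\dedit$ reduces to $\min_\permut\|G-\permut\star G'\|_1$, the vector-space structure is immediate, and multiplicative closure rests on the same key identity $H_1[G_1]\,H_2[G_2]=(H_1\otimes H_2)[G_1\otimes G_2]$ with $H_1\otimes H_2$ invariant. The only cosmetic difference is that you verify invariance of the tensor-product functional directly on simple tensors, whereas the paper does so via the vectorization/Kronecker characterization $P^{\otimes(k_1+k_2)}\vect{H_3}=\vect{H_3}$ of \cite{Maron2019}; both are fine.
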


The proof, presented in Appendix \ref{app:proof_algebra} follows from manipulations of Kronecker products.

\paragraph{Separability.} The main difficulty in applying Stone-Weierstrass theorem is the separation of points, which we prove in the next Lemma.

\begin{lemma}\label{lem:separation}
$\Nninv^\otimes(\rho_\text{sig})$ separates points.
\end{lemma}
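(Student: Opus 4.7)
For every pair $\equi{G_1}\neq\equi{G_2}\in\Gginv$, I must exhibit some $f\in\Nninv^\otimes(\rho_\text{sig})$ with $f(G_1)\neq f(G_2)$. I split into two cases. First, if $G_1,G_2$ have distinct numbers of nodes $n_1\neq n_2$, take a single-channel GNN with zero equivariant operator $F\equiv 0:\RR^{n^\kinput}\to\RR^n$, equivariant bias $B=\mathbf{1}_n\in\RR^n$ (the constant vector is invariant under permutations, hence admissible as an equivariant bias), and invariant $H$ the total sum on $\RR^n$. This yields $f(G)=\rho_\text{sig}(1)\cdot n$, which separates $n_1\neq n_2$.

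\textbf{Same size.} The main case is $G_1,G_2\in\RR^{n^\kinput}$ with $G_2\notin\Permut(G_1)$. Since $\Permut_n$ is a finite group acting linearly on the finite-dimensional space $\RR^{n^\kinput}$, classical invariant theory guarantees that the algebra $\RR[G]^{\Permut_n}$ of invariant polynomials separates orbits; pick any invariant polynomial $p$ with $p(G_1)\neq p(G_2)$. By the Reynolds operator, $p$ is a finite linear combination of monomial orbit-sums
\[
M(G)=\sum_{\permut\in\Permut_n}\prod_{t=1}^T G_{\permut(I_t)}, \qquad I_1,\ldots,I_T\in[n]^\kinput.
\]
It therefore suffices to uniformly approximate each such $M$ on $\{\|G\|\leq R\}$ by elements of $\Nninv^\otimes(\rho_\text{sig})$: combining with approximation error below $\tfrac12|p(G_1)-p(G_2)|$ yields the required separating GNN.

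\textbf{Realizing each $M$.} Build a single channel with $T$ Kronecker factors, each using the identity equivariant operator $F_t:G\mapsto G$ (so $k_{st}=\kinput$) and zero bias $B_t=0$; its internal tensor is $\rho_\text{sig}(G)^{\otimes T}\in\RR^{n^{\kinput T}}$, with entries $\prod_t\rho_\text{sig}(G_{J_t})$. By classical MLP universality for $\rho_\text{sig}$ (Cybenko--Hornik), the identity $x\mapsto x$ is uniformly approximated on $[-R,R]$ by sums of sigmoids of affine maps, and combining such basic blocks via the algebra closure of Lemma \ref{lem:algebra} yields uniform approximations of each tensor factor $G$, hence, by tensorization, of $G^{\otimes T}$. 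Finally apply the invariant linear operator $H_\pi:\RR^{n^{\kinput T}}\to\RR$ from Maron's classification corresponding to the partition $\pi$ of $[\kinput T]$ that encodes the equality pattern of the multi-indices $(I_1,\ldots,I_T)$; this extracts a positive multiple of the orbit-sum $M(G)$ from $G^{\otimes T}$.

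\textbf{Main obstacle.} The crux is propagating the componentwise MLP approximation of $x\mapsto x$ through the Kronecker product to obtain a uniform approximation of $G^{\otimes T}$ on a bounded set; this relies on boundedness of $G$, the local Lipschitz property of $\rho_\text{sig}$, and the closure of $\Nninv^\otimes(\rho_\text{sig})$ under sums and products (Lemma \ref{lem:algebra}). A complementary combinatorial point, straightforward by index tracking, is that the Maron partition $\pi$ dictated by the equality pattern of $(I_1,\ldots,I_T)$ exactly recovers $M(G)$ up to a positive multiplicative constant.
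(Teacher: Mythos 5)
Your proof is correct in its overall structure, but it takes a genuinely different route from the paper's. The paper's proof of Lemma~\ref{lem:separation} is a self-contained contradiction argument: it uses specific networks $H[\rho_\text{sig}(\lambda(G - \tau\mathbf{1}^{\otimes\kinput}))]$ with $\lambda\to\infty$ to first match the number of nodes and then the multiset of entries of $G$ and $G'$, and then works directly with the tensors $A=\rho_\text{sig}(G)$, $A'=\rho_\text{sig}(G')$ (exploiting injectivity of the sigmoid) together with invariant operators of the form $H[A^{\otimes k}]=\sum_{\permut}\prod a_{\permut(\cdot)}^{k_\cdot}$, and concludes by the bespoke combinatorial Lemma~\ref{lem:lemmapermut}. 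You instead invoke orbit separation by $\Permut_n$-invariant polynomials, the Reynolds decomposition into monomial orbit-sums, classical MLP universality to approximate the identity, and Maron's classification of invariant operators --- precisely the ingredient set of \cite{Maron2019a}'s original proof, which the paper explicitly positions itself as an alternative to. Both arguments are valid; the paper's is more elementary and self-contained, while yours re-imports the polynomial/MLP-universality machinery into the separation step of the Stone--Weierstrass framework, which makes Stone--Weierstrass itself do less work.

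Two technical points you should tighten. First, the appeal to Lemma~\ref{lem:algebra} to propagate the approximation of $x\mapsto x$ through the Kronecker factors is not a direct application: Lemma~\ref{lem:algebra} concerns closure under products of \emph{scalar-valued invariant} functions, whereas the intermediate $\phi_\varepsilon(G)^{\otimes T}$ is a tensor-valued equivariant quantity. The correct justification is by direct expansion: writing $\phi_\varepsilon(x)=\sum_i a_i\rho_\text{sig}(b_i x+c_i)$ and using multilinearity of $\otimes$ together with linearity of $H_\pi$, one obtains $H_\pi[\phi_\varepsilon(G)^{\otimes T}]=\sum_{i_1,\ldots,i_T}(\prod_t a_{i_t})\,H_\pi\bigl[\bigotimes_t\rho_\text{sig}(b_{i_t}G+c_{i_t}\mathbf{1})\bigr]$, where each summand is of the form \eqref{eq:GNNalgebra} with $F_{st}[G]=b_{i_t}G$ and $B_{st}=c_{i_t}\mathbf{1}$; the telescoping bound $\|\phi_\varepsilon(G)^{\otimes T}-G^{\otimes T}\|_\infty\lesssim T\,C^{T-1}\|\phi_\varepsilon-\mathrm{id}\|_{\infty,[-R,R]}$ then gives the uniform approximation. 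Second, the ``positive multiple'' claim for $H_\pi$ deserves a sentence: writing $\iota:[\kinput T]\to[n]$ for the index pattern of $(I_1,\ldots,I_T)$ with partition $\pi$ of $r$ blocks, each injective choice of $r$ distinct values is reached by exactly $(n-r)!$ permutations $\permut$, so $M(G)=(n-r)!\,H_\pi[G^{\otimes T}]$ for the partition-$\pi$ basis element that sums over injective-on-blocks assignments. With these two points spelled out, the argument is sound.
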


The proof, presented in Appendix \ref{app:proof_separation}, proceeds by contradiction: we show that two graphs $G, G'$ that coincides for every GNNs are necessarily permutation of each other. 
Applying Stone-Weierstrass theorem, we have thus proved that $\Nninv^\otimes(\rho_\text{sig})$ is dense in $\Cc(\Gginv,\dedit)$.
%

Then, following \cite{Hornik1989}, we go back to the original class $\Nninv(\rho)$, by applying: $(i)$ a Fourier approximation of $\rho_\text{sig}$, $(ii)$ the fact that a product of $\cos$ is also a sum of $\cos$, and $(iii)$ an approximation of $\cos$ by any other non-linearity. The following Lemma is proved in Appendix \ref{app:proof_cos}, and concludes the proof of Thm \ref{thm:main_invariant}.
\begin{lemma}\label{lem:invariantcos}
We have the following: $(i)$ $\Nninv^\otimes(\cos)$ is dense in $\Nninv^\otimes(\rho_\text{sig})$;
$(ii)$ $\Nninv^\otimes(\cos) = \Nninv(\cos)$;
$(iii)$ for any $\rho\in \Ff_\textup{MLP}$, $\Nninv(\rho)$ is dense in $\Nninv(\cos)$.

\end{lemma}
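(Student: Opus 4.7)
The plan is to implement a Hornik-style chain of three reductions. Step (i) swaps the sigmoid for a cosine via Fourier approximation on a bounded interval. Step (ii) removes the Kronecker-product structure by expanding products of cosines into sums of cosines via the product-to-sum identity, exploiting the fact that ``outer sums'' of equivariant tensors remain equivariant. Step (iii) replaces the cosine by an arbitrary MLP-compatible nonlinearity using the classical univariate universal approximation theorem applied entrywise.

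\textbf{Part (i): sigmoid to cosine.} Since $\Gginv$ is compact and each $F_{st},B_{st}$ is continuous, the scalar entries $(F_{st}[G] + B_{st})_i$ remain in a bounded interval $I\subset\RR$ as $G$ ranges over $\Gginv$ and $n\leq n_\text{max}$. On $I$, $\rho_\text{sig}$ is uniformly approximated to arbitrary precision by trigonometric polynomials $\sum_{j} a_j \cos(\omega_j x + \phi_j)$. Applied entrywise,
\[
\rho_\text{sig}\bigl(F_{st}[G] + B_{st}\bigr) \approx \sum_j a_j \cos\!\bigl(\omega_j F_{st}[G] + (\omega_j B_{st} + \phi_j \mathbf{1})\bigr),
\]
where $\omega_j F_{st}$ is linear equivariant and $\omega_j B_{st} + \phi_j \mathbf{1}$ is an equivariant bias (the all-ones tensor $\mathbf{1}$ is fixed by $\Permut$). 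By multilinearity of $\otimes$, a product of such sums expands into a finite sum of tensor products of single cosine tensors. Since $H_s$ and $\otimes$ are Lipschitz on bounded inputs, the error propagates uniformly through \eqref{eq:GNNalgebra}, yielding density of $\Nninv^\otimes(\cos)$ in $\Nninv^\otimes(\rho_\text{sig})$.

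\textbf{Part (ii): removing Kronecker products.} The identity $\cos(a)\cos(b) = \tfrac{1}{2}[\cos(a+b) + \cos(a-b)]$ holds entrywise. To handle the Kronecker product of tensors of possibly different orders, introduce the outer sum $A\boxplus B$ defined by $(A\boxplus B)_{i,j}\eqdef A_i + B_j$, which lies in $\RR^{n^{k_1+k_2}}$ when $A\in\RR^{n^{k_1}}$ and $B\in\RR^{n^{k_2}}$. A direct calculation on multi-indices shows that $\boxplus$ sends a pair of equivariant tensors to an equivariant tensor of higher order, and
\[
\cos(A)\otimes\cos(B) = \tfrac{1}{2}\bigl[\cos(A \boxplus B) + \cos(A \boxplus (-B))\bigr]
\]
entrywise. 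Iterating across the $T_s$ factors of \eqref{eq:GNNalgebra} produces $2^{T_s-1}$ terms of the form $c_\epsilon\cos\bigl(L_\epsilon[G] + \tilde B_\epsilon\bigr)$, with $L_\epsilon$ linear equivariant into $\RR^{n^{\sum_t k_{st}}}$ and $\tilde B_\epsilon$ an equivariant bias. Distributing the linear invariant $H_s$ through the finite sum yields a collection of $\Nninv(\cos)$ channels. The reverse inclusion is trivial ($T_s=1$), so equality holds.

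\textbf{Part (iii) and main obstacle.} For any $\rho\in\Ff_\text{MLP}$, classical univariate universality asserts that finite sums $\sum_j a_j\rho(w_j x + b_j)$ are dense in $C(I)$ on any compact $I\subset\RR$. Applied entrywise to $(F_s[G]+B_s)_i$ on its bounded range,
\[
\cos\bigl(F_s[G]+B_s\bigr) \approx \sum_j a_j \,\rho\bigl(w_j F_s[G] + (w_j B_s + b_j\mathbf{1})\bigr),
\]
uniformly, and distributing the linear operator $H_s$ through the sum gives density of $\Nninv(\rho)$ in $\Nninv(\cos)$. The principal difficulty across the three parts is the bookkeeping in (ii): one must verify that, for each sign pattern $\epsilon\in\{\pm 1\}^{T_s-1}$, the iterated outer sum with the appropriate signs assembles into a bona fide linear equivariant operator $L_\epsilon$ into $\RR^{n^{\sum_t k_{st}}}$, so that the original invariant $H_s$ can be reused to form a valid $\Nninv(\cos)$ network. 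The uniform control needed in (i) and (iii) is inherited directly from compactness of $\Gginv$ and continuity of the linear pieces.
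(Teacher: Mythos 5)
Your proposal follows the same Hornik-style three-step reduction as the paper: Fourier approximation of the sigmoid on a compact interval, product-to-sum expansion of cosines to remove the Kronecker structure (your outer-sum $A\boxplus B$ is just a cleaner notation for the paper's $\bar F_1[G]+\bar F_2[G]$ with $\bar F_1 = F_1\otimes\mathbf{1}\mathbf{1}^\top$, $\bar F_2 = \mathbf{1}\mathbf{1}^\top\otimes F_2$), and finally univariate universal approximation of cosine by $\rho$. The argument is correct and matches the paper's proof essentially step for step.
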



\section{The case of equivariant functions}\label{sec:equi}

This section contains our main contribution. 
We examine the case of equivariant functions that return a vector $f(G) \in \RR^n$ when $G$ has $n$ nodes, such that $f(\permut \star G) = \permut \star f(G)$. In that case, isomorphic graphs are not equivalent anymore. Hence we consider a compact set of graphs
\[
	\Ggeq \eqdef \enscond{G\in \RR^{n^\kinput}}{n\leq n_{\max}, \norm{G}\leq R}, 
\]
Like the invariant case, we consider several numbers of nodes $n\leq n_{\max}$ and will prove uniform approximation over them. We do not use the edit distance but a simpler metric:
\[
d(G,G') = \begin{cases}
\norm{G-G'}  &\text{if $G$ and $G'$ have the same number of nodes,} \\
\infty &\text{otherwise.}
\end{cases}
\]
for any norm $\norm{\cdot}$ on $\RR^{n^\kinput}$.  

The set of equivariant continuous functions is denoted by $\Cceq(\Ggeq,d)$, equipped with the infinity norm $\norm{f}_\infty = \sup_{G\in\Ggeq} \norm{f(G)}_\infty$. We recall that $\Nneq(\rho) \subset \Cceq(\Ggeq,d)$ denotes one-layer GNNs of the form \eqref{eq:GNN}, with equivariant output operators $H_s$.
Our main result is the following.


\begin{theorem}\label{thm:main_equivariant}
For any $\rho\in \Ff_\textup{MLP}$, $\Nneq(\rho)$ is dense in $\Cceq(\Ggeq,d)$.
\end{theorem}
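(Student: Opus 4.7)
The proof will parallel the invariant case (Theorem \ref{thm:main_invariant}) but rely on a generalized Stone-Weierstrass theorem for equivariant functions, Theorem \ref{thm:SWequi}. The natural algebraic structure is no longer a real subalgebra but a \emph{module}: whenever $g$ is a continuous invariant function and $f$ is a continuous equivariant one, $g \cdot f$ is again equivariant. I therefore expect Theorem \ref{thm:SWequi} to assert that a module of equivariant functions over an invariant subalgebra is dense in $\Cceq(\Ggeq, d)$ provided (i) the algebra contains the constants and separates orbits, and (ii) the module satisfies an ``equivariant separation'' property, namely that at each $G$ the set of values $\{f(G)\}$ attained by elements of the module equals the linear subspace of $\RR^n$ fixed by the self-isomorphisms of $G$.

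The first step is to enlarge $\Nneq(\rho)$ into a class $\Nneq^\otimes(\rho)$, defined exactly as in \eqref{eq:GNNalgebra} but with an equivariant read-out $H_s$ instead of an invariant one. Using $(\permut \star G) \otimes (\permut \star G') = \permut \star (G \otimes G')$, the identity $\permut \star (g(G) f(G)) = g(G)\,(\permut \star f(G))$ for invariant $g$ and equivariant $f$, and manipulations analogous to Lemma \ref{lem:algebra}, one checks that $\Nneq^\otimes(\rho)$ is a module over $\Nninv^\otimes(\rho)$ inside $\Cceq(\Ggeq, d)$. The second step is to apply Theorem \ref{thm:SWequi} to this pair with $\rho = \rho_\text{sig}$; condition (i) is provided by Lemma \ref{lem:separation} together with the fact that constants lie in $\Nninv^\otimes$. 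The last step is to pass from $\Nneq^\otimes(\rho_\text{sig})$ back to $\Nneq(\rho)$ by the three-step chain of Lemma \ref{lem:invariantcos}: approximation of $\rho_\text{sig}$ by truncated Fourier series, absorption of Kronecker products of cosines into a single cosine (now followed by an equivariant $H_s$ rather than an invariant one, which is harmless), and replacement of $\cos$ by any $\rho \in \Ff_\text{MLP}$.

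The main obstacle is verifying the equivariant separation condition (ii). Concretely, one must show that for every $G \in \Ggeq$ with $n$ nodes and every two indices $i, j \in [n]$ that lie in different orbits of the stabilizer $\{\permut : \permut \star G = G\}$, some GNN $f \in \Nneq^\otimes(\rho_\text{sig})$ satisfies $f(G)_i \neq f(G)_j$; equivalently, the evaluation map $f \mapsto f(G)$ surjects onto the subspace of $\RR^n$ fixed by this stabilizer. One inclusion is automatic from equivariance; surjectivity requires a genuine construction. My plan is to reduce this ``rooted graph'' separation to the ordinary graph separation of Lemma \ref{lem:separation}: if $i$ and $j$ are not in the same stabilizer-orbit then the rooted graphs $(G, i)$ and $(G, j)$ are non-isomorphic, so by combining equivariant linear operators of sufficiently high order (as characterized by \cite{Maron2019}) with Kronecker products one can manufacture an invariant ``node feature'' sensitive to this distinction, and then turn it into an equivariant vector via a final equivariant read-out. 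Carrying out this construction uniformly across all $G$ and all $n \leq n_{\max}$, together with establishing the precise statement of the equivariant Stone-Weierstrass theorem used above, is the core technical content of the section.
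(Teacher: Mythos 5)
The scaffolding you describe (enlarge $\Nneq(\rho)$ to a class $\Nneq^\otimes(\rho)$ with Kronecker products and equivariant read-outs, apply a generalized Stone--Weierstrass theorem at $\rho=\rho_\text{sig}$, then return to $\Nneq(\rho)$ by the Fourier/cosine/MLP chain) matches the paper's. But the proposal has a genuine gap exactly where the paper's main work lies: you never state or prove the generalized Stone--Weierstrass theorem, you only conjecture its form. Moreover, the form you conjecture --- a Nachbin-style density theorem for a module over the invariant subalgebra, with density governed by the evaluation map surjecting onto the subspace of $\RR^n$ fixed by the stabilizer of each $G$ --- cannot be imported from classical results: module versions of Stone--Weierstrass require the scalar algebra to separate the \emph{points} of the underlying space, whereas the invariant algebra $\Nninv^\otimes$ separates only \emph{orbits} $\Permut(G)$ (and here graphs of different sizes are mixed in as well). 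Your claimed equivalence between ``indices in different stabilizer orbits are separated by some $f$'' and ``the evaluation map surjects onto the fixed subspace'' is also left unproved, and establishing it, or the density statement it is meant to feed, is essentially of the same difficulty as the theorem you defer. The paper instead treats $\Nneq^\otimes(\rho)$ as a \emph{subalgebra} of $\Cceq(\Ggeq,d)$ under the Hadamard product (Lemma \ref{lem:algebraequi}) and proves Theorem \ref{thm:SWequi} under two hypotheses (separability and self-separability) by a nontrivial adaptation of the constructive proof of \cite{Brosowski1981}: one restricts to the set $\Gg_f$ where the coordinates of the target $f$ are sorted, constructs step functions whose first $\ell$ coordinates are near $1$ on $B$ and near $0$ elsewhere (Lemma \ref{lem:step-func}), and sums the step functions attached to level sets of the jumps $[f(G)]_\ell - [f(G)]_{\ell+1}$; neither this construction nor any substitute for it appears in your proposal.

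The second under-specified step is the verification of the within-graph separation property for GNNs (the paper's self-separability). You gesture at reducing ``rooted graph'' separation to Lemma \ref{lem:separation} by manufacturing node features with high-order equivariant operators, but give no construction and no argument that failure of separation forces a self-isomorphism. The paper's Lemma \ref{lem:separation_equi} does this explicitly: setting $A=\rho_\text{sig}(G)$, it exhibits the equivariant operators $[f(G)]_q=\sum_{\permut:\,\permut(k)=q}\prod_{i_1,\ldots,i_\kinput} a_{\permut(i_1),\ldots,\permut(i_\kinput)}^{k_{i_1,\ldots,i_\kinput}}$, verifies their equivariance, and then applies the multiset Lemma \ref{lem:lemmapermut} to extract a self-isomorphism in $\Permut_I$ whenever all such statistics agree at coordinates $k\in I$ and $\ell\in I^c$. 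Without that construction and without a proof of the generalized Stone--Weierstrass theorem itself, your text is a plausible roadmap rather than a proof of Theorem \ref{thm:main_equivariant}.
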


The proof, detailed in the next section, follows closely the previous proof for invariant functions, but is significantly more involved. Indeed, the classical version of Stone-Weierstrass only provides density of a subalgebra of functions in the \emph{whole space} of continuous functions, while in this case $\Cceq(\Ggeq,d)$ is \emph{already} a particular subset of continuous functions. On the other hand, it seems difficult to make use of fully general versions of Stone-Weierstrass theorem, for which some questions are still open \citep{Glimm1960}.
Hence we prove a new, specialized Stone-Weierstrass theorem for equivariant functions (Theorem \ref{thm:SWequi}), obtained with a non-trivial adaptation of the constructive proof by \cite{Brosowski1981}. 

Like the invariant case, our theorem proves uniform approximation for all numbers of nodes $n \leq n_{\max}$ at once by a single GNN.
As is detailed in the next subsection, our proof of the generalized Stone-Weierstrass theorem relies on being able to \emph{sort} the coordinates of the output space $\RR^n$, and therefore our current proof technique does not extend to high-order \emph{output} $\RR^{n^\ell}$ (graph to graph mappings), which we leave for future work. For the same reason, while the previous invariant case could be easily extended to invariance to \emph{subgroups} of $\Oo_n$, as is done by \cite{Maron2019a}, for the equivariant case our theorem only applies when considering the full permuation group $\Oo_n$. Nevertheless, our generalized Stone-Weierstrass theorem may be applicable in other contexts where equivariance to permutation is a desirable property.

\paragraph{Comparison with \citep{Sannai2019}.} \cite{Sannai2019} recently proved that equivariant NNs acting \emph{on point clouds} are universal, that is, for $\kinput=1$ in our notations. Despite the apparent similarity with our result, there is a fundamental obstruction to extending their proof to high-order input tensors like graphs. Indeed, it strongly relies on Theorem 2 of \citep{zaheer2017deep} that characterizes invariant functions $\RR^n \to \RR$, which is no longer valid for high-order inputs.


\subsection{Sketch of proof of Theorem~\ref{thm:main_equivariant}: an equivariant version of Stone-Weierstrass theorem}

\begin{figure}
\centering
\includegraphics[width=0.85\textwidth]{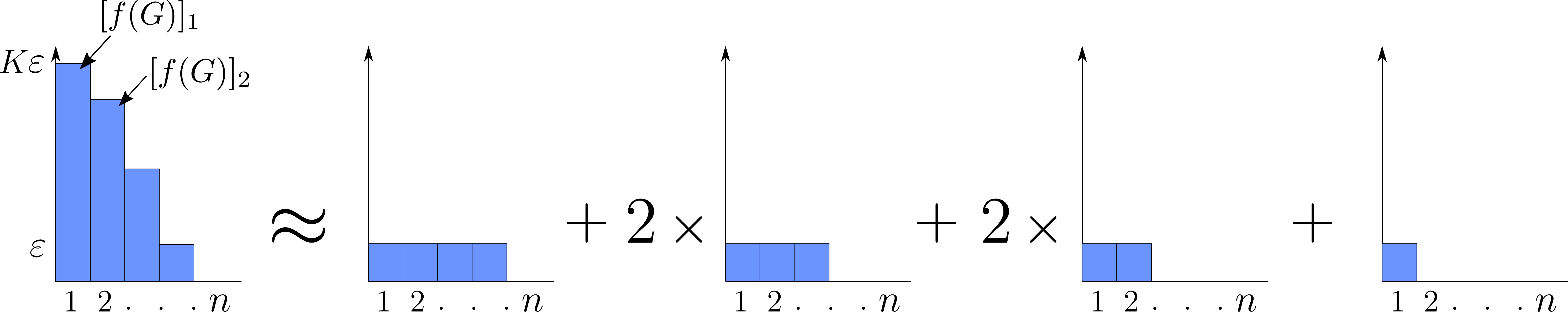}

\caption{\small Illustration of strategy of proof for the equivariant Stone-Weierstrass theorem (Theorem \ref{thm:SWequi}). Considering a function $f$ that we are trying to approximate and a graph $G$ for which the coordinates of $f(G)$ are sorted by decreasing order, we approximate $f(G)$ by summing step-functions $f_i$, whose first coordinates are close to $1$, and otherwise close to $0$.}
\label{fig:SW2}
\end{figure}

We first need to introduce a few more notations. For a subset $I\subset [n]$, we define
$
\Permut_I \eqdef \enscond{\permut \in \Permut_n}{\exists i \in I, j \in I^c, \permut(i) = j \text{ or } \permut(j) = i}
$
the set of permutations that exchange at least one index between $I$ and $I^c$. Indexing of vectors (or multivariate functions) is denoted by brackets, e.g. $[x]_I$ or $[f]_I$, and inequalities $x\geq a$ are to be understood element-wise.

\paragraph{A new Stone-Weierstrass theorem.} We  define the ``multiplication'' of two multivariate functions using the Hadamard product $\odot$, i.e. the component-wise multiplication. Since $(\permut\star x)\odot (\permut \star x') = \permut\star(x\odot x')$, it is easy to see that $\Cceq(\Ggeq,d)$ is closed under multiplication, and is therefore a (strict) subalgebra of the set of all continuous functions that return a vector in $\RR^n$ for an input graph with $n$ nodes.
As mentioned before, because of this last fact we cannot directly apply Stone-Weierstrass theorem. 
We therefore prove a new generalized version.

\begin{theorem}[Stone-Weierstrass for equivariant functions]\label{thm:SWequi}
Let $\Aa$ be a subalgebra of $\Cceq(\Ggeq,d)$, such that $\Aa$ contains the constant function $\mathbf{1}$ and:
\begin{itemize}[label=--, itemsep=0pt, leftmargin=*]
\item \emph{(Separability)} for all $G,G'\in \Ggeq$ with number of nodes respectively $n$ and $n'$ such that $G \notin \Permut(G')$, for any $k \in [n]$, $k'\in[n']$, there exists $f \in \Aa$ such that $[f(G)]_k \neq [f(G')]_{k'}$ ;
\item \emph{(``Self''-separability)} for all number of nodes $n \leq n_{\max}$, $I \subset [n]$, $G\in\Ggeq$ with $n$ nodes that has no self-isomorphism in $\Permut_I$, and $k \in I, \ell \in I^c$, there is $f \in \Aa$ such that $[f(G)]_k \neq [f(G)]_\ell$. 
\end{itemize}
Then $\Aa$ is dense in $\Cceq(\Ggeq,d)$.
\end{theorem}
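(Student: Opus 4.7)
The plan is to sidestep the need for a bespoke Stone--Weierstrass theorem by reducing the statement to the classical Theorem~\ref{thm:SW} applied to a quotient space of ``pointed graphs''. For each $n\le n_{\max}$, let $\Permut_n$ act on $X_n \eqdef \{G\in\RR^{n^\kinput}:\norm{G}\le R\}\times [n]$ by $\permut\cdot (G,k)=(\permut\star G,\permut(k))$, and set $Q_n=X_n/\Permut_n$ and $Q=\bigsqcup_{n=1}^{n_{\max}}Q_n$. Since each $X_n$ is compact Hausdorff and $\Permut_n$ is a finite group acting by homeomorphisms, each $Q_n$, and hence $Q$, is compact Hausdorff. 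Equivariance of $f\in\Cceq(\Ggeq,d)$ forces $[f(\permut\star G)]_{\permut(k)}=[f(G)]_k$, so $\Phi(f)([G,k])\eqdef [f(G)]_k$ descends to a well-defined continuous real-valued function on $Q$. A routine check shows that $\Phi:\Cceq(\Ggeq,d)\to\Cc(Q)$ is an isometric algebra isomorphism that carries the Hadamard product $\odot$ to pointwise multiplication, preserves $\norm{\cdot}_\infty$, and sends $\mathbf{1}$ to the constant function $1$.

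I would then translate the two hypotheses into ordinary point separation of $\Phi(\Aa)\subset\Cc(Q)$. Given distinct points $[G,k]\neq[G',k']$ in $Q$, two cases arise. If $G\notin\Permut(G')$, Separability immediately provides $f\in\Aa$ with $[f(G)]_k\neq[f(G')]_{k'}$. Otherwise $G'=\permut_0\star G$ for some $\permut_0$, so $[G',k']=[G,\permut_0^{-1}(k')]$, and non-equivalence means no self-isomorphism $\sigma$ of $G$ satisfies $\sigma(k)=\permut_0^{-1}(k')$. Taking $I$ to be the orbit of $k$ under the self-isomorphism group of $G$, the set $I$ is stable under every self-isomorphism (so none lies in $\Permut_I$), while $\permut_0^{-1}(k')\in I^c$; Self-separability then yields $f\in\Aa$ with $[f(G)]_k\neq[f(G)]_{\permut_0^{-1}(k')}$, and the right-hand side equals $[f(G')]_{k'}$ by equivariance. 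Either way, $\Phi(\Aa)$ separates points of $Q$.

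Having realized $\Phi(\Aa)$ as a subalgebra of $\Cc(Q)$ that contains $1$ and separates points of the compact Hausdorff space $Q$, the classical Stone--Weierstrass theorem (Theorem~\ref{thm:SW}) gives density of $\Phi(\Aa)$ in $\Cc(Q)$, and hence density of $\Aa$ in $\Cceq(\Ggeq,d)$ by pulling back through $\Phi^{-1}$. The main technical hurdle is the topological setup: showing that $Q$ is compact Hausdorff (quotient of a compact Hausdorff space by a finite group of homeomorphisms, using normality plus saturation of separating open sets under the group) and that $\Phi$ is a bijection of \emph{continuous} functions. The infinite inter-slice distances in $d$ do not pose a problem --- they simply turn $\Ggeq$ and $Q$ into finite disjoint unions indexed by $n$ --- but one must justify carefully that the ``bundle-valued'' nature of $f$ (with codomain $\RR^n$ depending on $G$) is equivalent to a single scalar function on the quotient. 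Once this bookkeeping is done, the proof reduces entirely to the classical Stone--Weierstrass theorem, avoiding the constructive Brosowski--Deutsch machinery suggested by Figure~\ref{fig:SW2}.
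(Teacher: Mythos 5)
Your proposal is correct, but it takes a genuinely different route from the paper. The paper proves Theorem~\ref{thm:SWequi} constructively, adapting Brosowski--Deutsch: it works on the set $\Gg_f$ where the coordinates of the target function are sorted, builds step-functions (Lemma~\ref{lem:step-func}) whose first $\ell$ coordinates are close to $1$ and the rest close to $0$, and sums them over modified level sets detecting the jumps $[f(G)]_\ell-[f(G)]_{\ell+1}$. You instead observe that equivariant maps into the permutation module $\RR^n$ are the same thing as invariant scalar functions on pointed graphs: the map $\Phi(f)([G,k])=[f(G)]_k$ is an isometric algebra isomorphism from $\Cceq(\Ggeq,d)$ onto $\Cc(Q)$, where $Q=\bigsqcup_n\bigl(\{G\in\RR^{n^\kinput}:\norm{G}\le R\}\times[n]\bigr)/\Permut_n$ is compact Hausdorff (finite-group quotient of a compact metric space), and every $h\in\Cc(Q)$ pulls back to a continuous equivariant $f$ via $[f(G)]_k=h([G,k])$, so $\Phi$ is indeed onto. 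Your translation of the hypotheses is also sound: the only delicate case is $[G,k]\neq[G',k']$ with $G'=\permut_0\star G$, which reduces to separating $[f(G)]_k$ from $[f(G)]_\ell$ with $\ell=\permut_0^{-1}(k')$ not in the $\mathrm{Aut}(G)$-orbit $I$ of $k$; since self-isomorphisms preserve that orbit, none lies in $\Permut_I$, so the self-separability hypothesis applies exactly as you use it, and equivariance gives $[f(G')]_{k'}=[f(G)]_\ell$. Classical Stone--Weierstrass on $Q$ then yields the result. What each approach buys: the paper's argument is self-contained and explicitly constructs the approximants, and its sorting device is what the authors cite as the obstruction to higher-order outputs; your reduction is shorter, delegates all analysis to the classical theorem, and does not use sorting at all, so it plausibly extends to equivariant outputs in $\RR^{n^\ell}$ (pointing with an $\ell$-tuple of indices) or to subgroups of $\Permut_n$, provided the separability hypotheses are restated for index tuples modulo self-isomorphisms --- a limitation the paper explicitly acknowledges for its own technique.
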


In addition to a ``separability'' hypothesis, which is similar to the classical one, Theorem \ref{thm:SWequi} requires a ``self''-separability condition, which guarantees that $f(G)$ can have different values on its coordinates under appropriate assumptions on $G$. We give below an overview of the proof of Theorem \ref{thm:SWequi}, the full details can be found in Appendix \ref{app:SWproof}.

Our proof is inspired by the one for the classical Stone-Weierstrass theorem (Thm. \ref{thm:SW}) of \cite{Brosowski1981}. Let us first give a bit of intuition on this earlier proof. It relies on the explicit construction of ``step''-functions: given two disjoint closed sets $A$ and $B$, they show that $\Aa$ contains functions that are approximately $0$ on $A$ and approximately $1$ on $B$. Then, given a function $f:X\to \RR$ (non-negative w.l.o.g.) that we are trying to approximate and $\varepsilon>0$, they define $A_k=\enscond{x}{f(x)\leq (k-1/3)\varepsilon}$ and $B_k=\enscond{x}{f(x)\geq (k+1/3)\varepsilon}$ as the lower (resp. upper) \emph{level sets} of $f$ for a grid of values with precision $\varepsilon$. Then, taking the step-functions $f_k$ between $A_k$ and $B_k$, it is easy to prove that $f$ is well-approximated by $g=\varepsilon\sum_k f_k$, since for each $x$ only the right number of $f_k$ is close to $1$, the others are close to $0$.

The situation is more complicated in our case. Given a function $f\in \Cceq(\Ggeq,d)$ that we want to approximate, we work in the compact subset of $\Ggeq$ where the coordinates of $f$ are \emph{ordered}, since by permutation it covers every case: 
$
	\Gg_f \eqdef \enscond{G\in\Ggeq}{\text{if $G\in\RR^{n^\kinput}$: }[f(G)]_1 \geq [f(G)]_2 \geq \ldots \geq [f(G)]_n}
$. 
Then, we will prove the existence of step-functions such that: when $A$ and $B$ satisfy some appropriate hypotheses, the step-function is close to $0$ on $A$, and \emph{only the first coordinates are close to $1$} on $B$, the others are close to $0$.
Indeed, by combining such functions, we can approximate a vector of ordered coordinates (Fig. \ref{fig:SW2}). The construction of such step-functions is done in Lemma~\ref{lem:step-func}. 
Finally, we consider modified level-sets
\begin{align*}
A_k^{n,\ell} &\eqdef \enscond{G \in \Gg_f\cap \RR^{n^\kinput}}{[f(G)]_\ell - [f(G)]_{\ell+1} \leq (k-1/3)\varepsilon} \cup \bigcup_{n'\neq n} \pa{\Gg_f\cap \RR^{(n')^\kinput}}, \\
B_k^{n,\ell} &\eqdef \enscond{G \in \Gg_f\cap \RR^{n^\kinput}}{[f(G)]_\ell - [f(G)]_{\ell+1} \geq (k+1/3)\varepsilon} 
\end{align*}
that distinguish ``jumps'' between (ordered) coordinates. We define the associated step-functions $f_k^{n,\ell}$, 
 and show that $g = \varepsilon \sum_{k,n,\ell} f_k^{n,\ell}$ is a valid approximation of $f$. 

%

\paragraph{End of the proof.} The rest of the proof of Theorem \ref{thm:main_equivariant} is similar to the invariant case. We first build an algebra of GNNs, again by considering nets of the form \eqref{eq:GNNalgebra}, where we replace the $H_s$'s by equivariant linear operators in this case. We denote this space by $\Nneq^\otimes(\rho)$. 

\begin{lemma}\label{lem:algebraequi}
$\Nneq^\otimes(\rho)$ is a subalgebra of $\Cceq(\Ggeq,d)$.
\end{lemma}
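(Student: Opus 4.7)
The plan is to first verify that $\Nneq^\otimes(\rho) \subset \Cceq(\Ggeq, d)$, then check closure under the three algebra operations. Continuity of each $f \in \Nneq^\otimes(\rho)$ is immediate from the continuity of $\rho$ and of linear maps. Equivariance is assembled channel by channel: each $F_{st}$ is equivariant linear, each $B_{st}$ is fixed by $\permut\star$, $\rho$ acts pointwise (hence commutes with $\permut\star$), the Kronecker product satisfies $(\permut\star X)\otimes (\permut\star Y) = \permut\star(X\otimes Y)$, and $H_s$ is equivariant linear from $\RR^{n^{K_s}}$ to $\RR^n$ (with $K_s = \sum_t k_{st}$); the final bias $b$ is an equivariant vector in $\RR^n$, i.e.\ a scalar multiple of $\mathbf{1}$. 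Sums of equivariant functions are equivariant.

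Closure under scalar multiplication and under sums is straightforward: scalars absorb into $H_s$ and $b$, and the sum of two GNNs is obtained by concatenating their lists of channels and adding their biases. The content of the lemma therefore reduces to closure under the Hadamard product. By bilinearity of $\odot$, it suffices to treat pairwise products of the building blocks of \eqref{eq:GNNalgebra}. Bias--bias products give another constant vector in $\RR\mathbf{1}$, and bias--channel products merely rescale a channel, since an equivariant vector in $\RR^n$ is a multiple of $\mathbf{1}$. The essential case is the product of two pure channels
\[
u(G) = H_1\bigl[ A_1(G) \bigr], \qquad v(G) = H_2\bigl[ A_2(G) \bigr], \qquad A_s(G) = \bigotimes_{t=1}^{T_s} \rho\bigl( F_{st}[G] + B_{st}\bigr).
\]

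The key identity is
\[
u(G) \odot v(G) \;=\; D\bigl( u(G) \otimes v(G) \bigr) \;=\; \bigl( D \circ (H_1 \otimes H_2) \bigr)\bigl[ A_1(G) \otimes A_2(G) \bigr],
\]
where $D:\RR^{n^2}\to \RR^n$ is the diagonal-extraction operator $D(M)_i = M_{ii}$, and where we use the defining relation $(H_1\otimes H_2)(X\otimes Y) = H_1(X) \otimes H_2(Y)$ of the tensor product of linear maps. Three checks remain: (i) $D$ is equivariant linear, since $(\permut\star M)_{\permut(i),\permut(i)} = M_{ii}$, so $D\circ (H_1\otimes H_2)$ is an equivariant linear map from $\RR^{n^{K_1+K_2}}$ to $\RR^n$; (ii) by associativity of $\otimes$, $A_1(G)\otimes A_2(G)$ is itself a Kronecker product of $T_1+T_2$ factors of the prescribed form $\rho(F[G]+B)$; (iii) the residual bias--bias and bias--channel contributions fit the template of \eqref{eq:GNNalgebra}. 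Combining all pairwise products thus yields $f\odot g \in \Nneq^\otimes(\rho)$.

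The main obstacle is finding the right way to linearize the Hadamard product; the diagonal-extraction trick does exactly this, reducing $\odot$ on the outputs to an equivariant linear map applied to a Kronecker product on the hidden representations. Once this is observed, the remaining manipulations mirror those of the invariant case (Lemma~\ref{lem:algebra}), the only novelty being bookkeeping of equivariance on the output side in place of invariance.
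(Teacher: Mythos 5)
Your proof is correct and uses essentially the same argument as the paper: the identity $u\odot v = D(u\otimes v)$ with $D$ the diagonal-extraction operator is exactly the paper's $H_1[G_1]\odot H_2[G_2] = \diag\bigl(H_1\,\mathrm{mat}_{k,\ell}(G_1\otimes G_2)\,H_2^\top\bigr)$, just written as a composition $D\circ(H_1\otimes H_2)$ instead of a single matrix formula, and the equivariance checks match term for term. Your explicit handling of the vector-space axioms and the bias contributions is a harmless elaboration of what the paper leaves implicit.
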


The proof, presented in Appendix \ref{app:algebra_equi}, is very similar to that of Lemma \ref{lem:algebra}.
Then we show the two separation conditions for equivariant GNNs.

\begin{lemma}\label{lem:separation_equi}
$\Nneq^\otimes(\rho_\text{sig.})$ satisfies both the separability and self-separability conditions.
\end{lemma}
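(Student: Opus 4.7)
The plan is to deduce both conditions from the invariant separation Lemma~\ref{lem:separation}.

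For \textbf{separability}, given $G\notin\Permut(G')$, that lemma supplies $\tilde f \in \Nninv^\otimes(\rho_\text{sig})$ with $\tilde f(G)\neq \tilde f(G')$. I would turn it into an equivariant GNN by swapping each invariant linear output $\tilde H_s:\RR^{n^{K_s}}\to \RR$ for the ``invariant-then-broadcast'' equivariant map $T\mapsto \tilde H_s(T)\mathbf{1}_n$, which is a legitimate equivariant linear operator $\RR^{n^{K_s}}\to \RR^n$. The resulting $f\in \Nneq^\otimes(\rho_\text{sig})$ has all coordinates equal to $\tilde f(\cdot)$, hence $[f(G)]_k = \tilde f(G) \neq \tilde f(G') = [f(G')]_{k'}$ for every $k,k'$.

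For \textbf{self-separability}, the idea is to reduce distinguishing two coordinates of $f(G)$ to distinguishing two inequivalent ``marked'' graphs by an invariant GNN. Given $G$ with no self-isomorphism in $\Permut_I$ and $k\in I$, $\ell\in I^c$, any stabilizer element sending $k$ to $\ell$ would lie in $\Permut_I$, so $k$ and $\ell$ live in distinct orbits of $\mathrm{Stab}(G)$. I would associate to each $j$ the marked tensor $G^{(j)} \eqdef G + c\, M^{(j)}$, with $[M^{(j)}]_{i_1\dots i_\kinput} = \mathbb{1}[i_1=j]$, choosing $c > 2\|G\|_\infty$ so the marker cannot be absorbed into $G$ by any permutation. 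A short magnitude argument then yields $G^{(k)}\notin \Permut(G^{(\ell)})$: any $\sigma$ with $\sigma\star G^{(k)} = G^{(\ell)}$ would force $\sigma\star G - G = c(M^{(\ell)} - M^{(\sigma(k))})$, which with the chosen $c$ can only hold with both sides zero, giving $\sigma(k)=\ell$ and $\sigma\in\mathrm{Stab}(G)$, contradicting the hypothesis. Lemma~\ref{lem:separation} then supplies an invariant $\tilde f$ with $\tilde f(G^{(k)})\neq \tilde f(G^{(\ell)})$.

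The bulk of the work is to realize the vector-valued map $G\mapsto (\tilde f(G^{(j)}))_{j=1}^n$ as an element of $\Nneq^\otimes(\rho_\text{sig})$, by ``lifting'' each piece of $\tilde f$ by one tensor order and treating the extra index $j$ as the slice being marked. Concretely I would (i) replace every equivariant linear $\tilde F_{s,t}$ by its broadcast $F_{s,t}(G)_{a,j} \eqdef \tilde F_{s,t}(G)_a$; (ii) replace every bias $\tilde B_{s,t}$ by the lifted bias $[\bar B_{s,t}]_{a,j} \eqdef [\tilde B_{s,t}]_a + c\,\tilde F_{s,t}(M^{(j)})_a$, which is equivariant thanks to the identity $M^{(\sigma(j))} = \sigma\star M^{(j)}$; and (iii) replace every invariant output $\tilde H_s$ by the equivariant operator that first restricts the Kronecker-product tensor to the diagonal $j_1=\cdots=j_{T_s}=j$ of all slice indices and then applies $\tilde H_s$. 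By linearity, the slice at $j$ of $\rho(F_{s,t}(G)+\bar B_{s,t})$ is exactly $\rho(\tilde F_{s,t}(G^{(j)})+\tilde B_{s,t})$, so after Kronecker product and the diagonalized $\tilde H_s$, the $j$-th coordinate of the lifted network equals $\tilde f(G^{(j)})$, giving $[f(G)]_k\neq [f(G)]_\ell$. The main obstacle I anticipate is bookkeeping rather than conceptual: checking that broadcasting, diagonal restriction and the lifted biases are admissible equivariant linear operations in the sense of~\cite{Maron2019}, and that slice-at-$j$ commutes with the Kronecker product across factors (immediate since distinct factors carry distinct slice indices). Once that is in place, both conditions of Theorem~\ref{thm:SWequi} follow.
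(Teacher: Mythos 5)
Your separability argument coincides with the paper's: broadcast an invariant separating network to every coordinate through the equivariant output $T\mapsto \tilde H_s(T)\,\mathbf{1}_n$. For self-separability, however, you take a genuinely different route. The paper argues by contradiction inside the same algebraic machinery as the invariant case: with $A=\rho_\text{sig}(G)$ it exhibits explicit equivariant networks $[f(G)]_q=\sum_{\permut(k)=q}\prod_{i_1,\ldots,i_\kinput} a_{\permut(i_1),\ldots,\permut(i_\kinput)}^{k_{i_1,\ldots,i_\kinput}}$ and then invokes Lemma~\ref{lem:lemmapermut}: equality of these symmetrized monomial sums over $\Permut^{(k)}$ and $\Permut^{(\ell)}$ forces a self-isomorphism lying in $\Permut^{(\ell)}\subset\Permut_I$, a contradiction. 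You instead reduce self-separability to ordinary invariant separability of the two marked graphs $G^{(k)},G^{(\ell)}$; your magnitude argument showing $G^{(k)}\notin\Permut(G^{(\ell)})$ is correct, and your order-lifting construction (broadcast $F_{st}$, the lifted bias which is permutation-fixed thanks to $M^{(\permut(j))}=\permut\star M^{(j)}$, and the diagonal-restriction-then-$\tilde H_s$ output) is linear and equivariant, hence admissible since the class $\Nneq^\otimes$ allows arbitrary equivariant linear operators of any order; the slice computation giving $[f(G)]_j=\tilde f(G^{(j)})$, and thus $[f(G)]_k\neq[f(G)]_\ell$, checks out. What your route buys is that Lemma~\ref{lem:separation} is reused as a black box and the multiset/monomial machinery of Lemma~\ref{lem:lemmapermut} is not re-run; what the paper's route buys is that it never leaves the given graph $G$, whereas your marked graphs $G^{(k)},G^{(\ell)}$ may have norm exceeding $R$, so Lemma~\ref{lem:separation} as stated on $\Gginv$ does not literally cover them. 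That loose end is harmless --- the proof of Lemma~\ref{lem:separation} never uses the bound $R$, or alternatively you can rescale the marked graphs by a small $\lambda$ (non-isomorphy is preserved) and absorb $\lambda$ into the lifted linear maps and bias --- but it should be stated explicitly.
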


The proof is presented in Appendix \ref{app:separation_equi}. The ``normal'' separability is in fact equivalent to the previous one (Lemma \ref{lem:separation}), since we can construct an equivariant network by simply stacking an invariant network on every coordinate. The self-separability condition is proved in a similar way.
Finally we go back to $\Nneq(\rho)$ in exactly the same way. The proof of Lemma \ref{lem:equicos} is exactly similar to that of Lemma \ref{lem:invariantcos} and is omitted.

\begin{lemma}\label{lem:equicos}
We have the following: $(i)$ $\Nneq^\otimes(\cos)$ is dense in $\Nneq^\otimes(\rho_\text{sig})$;
$(ii)$ $\Nneq^\otimes(\cos) = \Nneq(\cos)$;
$(iii)$ for any $\rho \in \Ff_\textup{MLP}$, $\Nneq(\rho)$ is dense in $\Nneq(\cos)$.
\end{lemma}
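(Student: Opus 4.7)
} The plan is to mirror the three-step argument already used for Lemma~\ref{lem:invariantcos}, verifying at each step that the manipulations preserve equivariance (as opposed to invariance) of the output operators. Throughout, the compactness of $\Ggeq$ and continuity of each $F_s$, $H_s$ imply that the tensors $F_s[G]+B_s$ stay in a fixed bounded set, so all uniform approximations on a bounded interval of $\RR$ lift to uniform approximations on $\Ggeq$.

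\emph{Step (i).} Fix a GNN in $\Nneq^\otimes(\rho_\textup{sig})$ and a bound $M$ such that every scalar entry of every $F_{st}[G]+B_{st}$ lies in $[-M,M]$. By density of trigonometric polynomials in $\Cc([-M,M])$, approximate $\rho_\textup{sig}$ uniformly on $[-M,M]$ by a finite sum $\sum_j \alpha_j \cos(\omega_j x + \varphi_j)$. For each argument $F_{st}[G]+B_{st}$ we then substitute $\cos(\omega_j(F_{st}[G]+B_{st})+\varphi_j\mathbf{1}) = \cos(\tilde F^{(j)}_{st}[G]+\tilde B^{(j)}_{st})$ where $\tilde F^{(j)}_{st}=\omega_j F_{st}$ is again equivariant linear and $\tilde B^{(j)}_{st}=\omega_j B_{st}+\varphi_j\mathbf{1}$ is again an equivariant bias (the constant tensor $\mathbf{1}$ being trivially equivariant). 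Expanding the Kronecker products by multilinearity, the network becomes a finite sum of elements of $\Nneq^\otimes(\cos)$. Since $\otimes$ and the linear maps $H_s$ are uniformly Lipschitz on bounded inputs, the substitution error is uniformly small on $\Ggeq$.

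\emph{Step (ii).} The inclusion $\Nneq(\cos)\subset \Nneq^\otimes(\cos)$ is trivial ($T_s=1$). For the converse, use the product-to-sum identity
\[
\prod_{t=1}^{T}\cos(a_t) \;=\; \frac{1}{2^{T-1}}\sum_{\epsilon\in\{\pm 1\}^T,\ \epsilon_1=1}\cos\!\Big(\sum_{t=1}^{T}\epsilon_t a_t\Big).
\]
Applied entry-wise to the tensor $\bigotimes_t \cos(F_{st}[G]+B_{st})$, the $(i_1,\ldots,i_{\sum_t k_{st}})$-entry becomes $2^{-(T_s-1)}\sum_\epsilon \cos(\sum_t \epsilon_t [F_{st}[G]+B_{st}]_{i^{(t)}})$. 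For each sign vector $\epsilon$ define the equivariant linear operator $\tilde F_{s\epsilon}:\RR^{n^\kinput}\to \RR^{n^{\sum_t k_{st}}}$ and equivariant bias $\tilde B_{s\epsilon}$ whose entries are the corresponding linear combinations of the entries of the $F_{st}$'s and $B_{st}$'s (equivariance is preserved since it amounts to picking one coordinate per factor, replicating across all multi-indices, and summing — operations that commute with the $\star$ action). Then the tensor product equals $2^{-(T_s-1)}\sum_\epsilon \cos(\tilde F_{s\epsilon}[G]+\tilde B_{s\epsilon})$. Pushing $H_s$ and the outer sum through yields exactly the form~\eqref{eq:GNN} with $\rho=\cos$, proving equality of the two classes.

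\emph{Step (iii).} Let $\rho\in\Ff_\textup{MLP}$ and fix again a bound $M$ controlling every $F_s[G]+B_s$. By the classical universal approximation theorem for single-hidden-layer MLPs on the bounded interval $[-M,M]$, approximate $\cos$ uniformly by $\sum_j \alpha_j\rho(\omega_j x + \varphi_j)$. Substituting entry-wise replaces $\cos(F_s[G]+B_s)$ by $\sum_j \alpha_j \rho(\omega_j F_s[G] + \omega_j B_s+\varphi_j \mathbf{1})$, in which each $\omega_j F_s$ is equivariant linear and $\omega_j B_s+\varphi_j\mathbf{1}$ is an equivariant bias. Pulling the scalars through the linear operator $H_s$ gives a finite sum of elements of $\Nneq(\rho)$. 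Uniform control of the approximation error on $\Ggeq$ follows from the operator norms of the $H_s$ on the bounded tensor space.

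The main — but very mild — obstacle is the bookkeeping in Step (ii): one must check that the equivariant linear space is rich enough to realize every operator obtained by choosing one coordinate per Kronecker factor and summing with signs. This holds because the characterization of~\cite{Maron2019} makes the set of equivariant linear maps closed under sums, scalar multiples, and the natural ``select-a-subset-of-indices'' projections used here, and because constant tensors are admissible equivariant biases. Everything else is a direct transcription of the corresponding argument in the invariant case.
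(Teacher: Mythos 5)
Your proposal is correct and takes essentially the same approach as the paper, which in fact omits this proof with the remark that it is ``exactly similar'' to that of Lemma~\ref{lem:invariantcos}; you have correctly transcribed that three-step argument (Fourier approximation of $\rho_\text{sig}$, product-to-sum for $\cos$, classical MLP approximation of $\cos$) while checking at each step that the auxiliary operators $\omega_j F_{st}$, $\tilde F_{s\epsilon}$, etc.\ remain equivariant rather than invariant. The only cosmetic difference is that in step~(ii) you apply the $T$-fold product-to-sum identity directly, whereas the paper's invariant-case proof uses the pairwise identity and (implicitly) iterates; both yield the same class of networks.
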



\section{Numerical illustrations}

\begin{wrapfigure}{r}{0.35\textwidth}
\vspace{-30pt}
\centering
\includegraphics[width=0.3\textwidth]{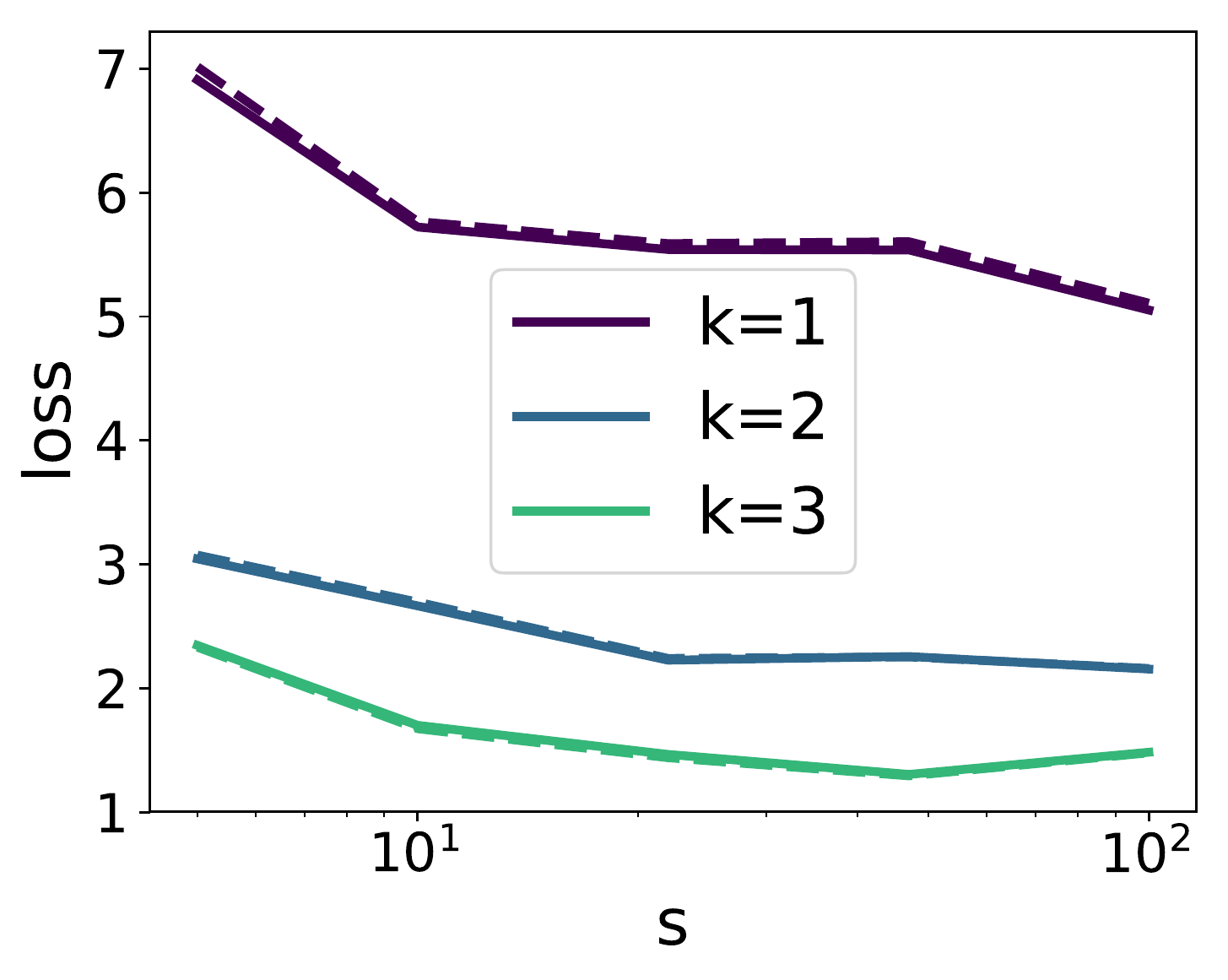}
\includegraphics[width=0.3\textwidth]{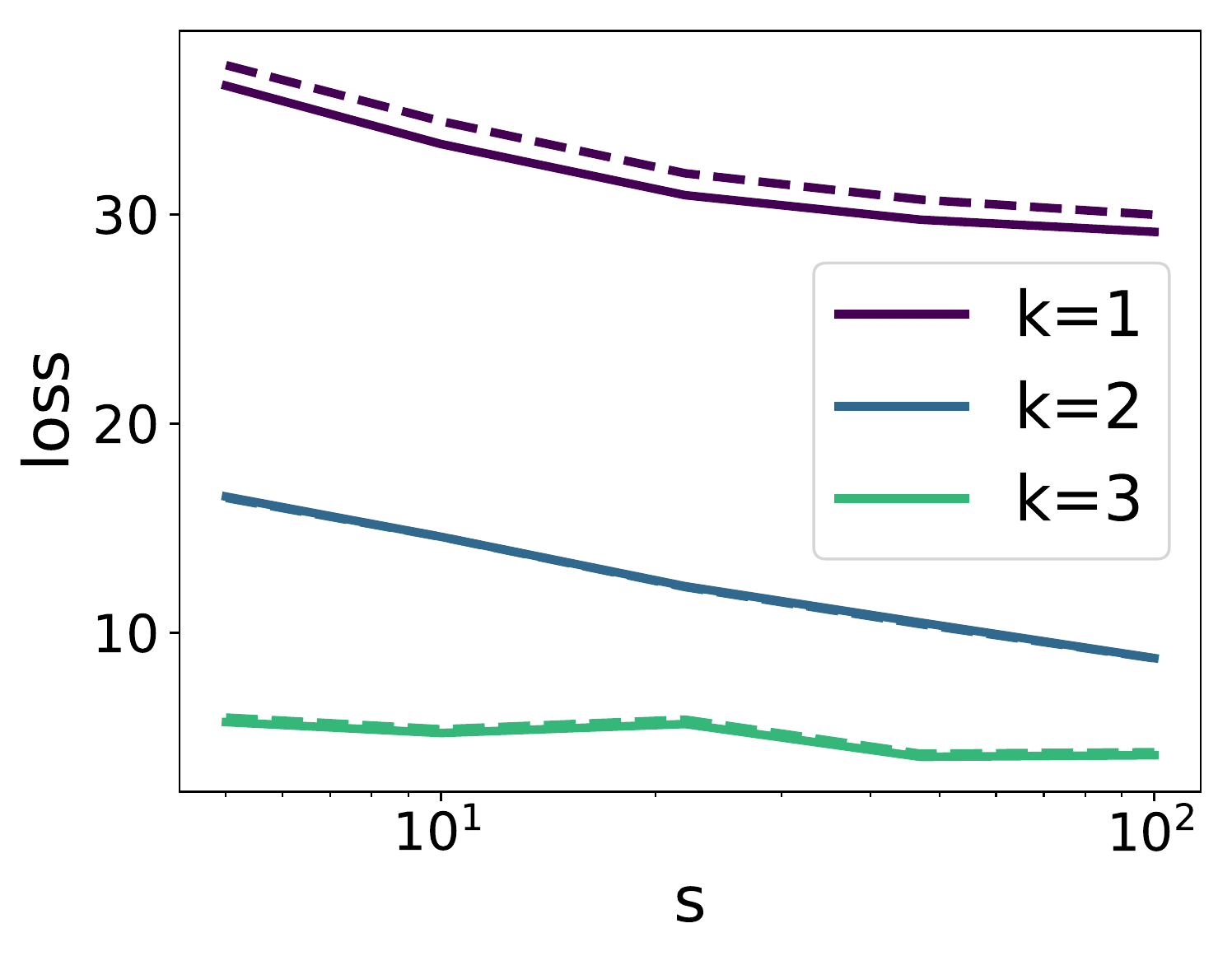}
\caption{\small MSE results after $150$ epochs, in the invariant (top) and equivariant (bottom) cases, averaged over $5$ experiments. Dashed lines represent the testing error.}
\label{fig:result}
\vspace{-15pt}
\end{wrapfigure}


This section provides numerical illustrations of our findings on simple synthetic examples.
The goal is to examine the impact of the tensorization orders $k_s$ and the width $S$. The code is available at \url{https://github.com/nkeriven/univgnn}. We emphasize that the contribution of the present paper is first and foremost theoretical, and that, like MLPs with a single hidden layer, we cannot expect the shallow GNNs \eqref{eq:GNN} to be state-of-the-art and compete with deep models, despite their universality. A benchmarking of \emph{deep} GNNs that use invariant and equivariant linear operators is done in \citep{Maron2019}.

We consider graphs, represented using their adjacency matrices (i.e. $2$-ways tensor, so that $\kinput=2$). The synthetic graphs are drawn uniformly among 5 graph topologies (complete graph, star, cycle, path or wheel) with edge weights drawn independently as the absolute value of a centered Gaussian variable. 
Since our approximation results are valid for several graph sizes simultaneously, both training and testing datasets contain $1.4\cdot 10^4$ graphs, half with $5$ nodes and half with $10$ nodes. The training is performed by minimizing a square Euclidean loss (MSE) on the training dataset. The minimization is performed by stochastic gradient descent using the ADAM optimizer~\citep{kingma2014adam}.  
We consider two different regression tasks: (i) in the invariant case, the scalar to predict is the geodesic diameter of the graph, (ii) in the equivariant case, the vector to predict assigns to each node the length of the longest shortest-path emanating from it.
While these functions can be computed using polynomial time all-pairs shortest paths algorithms, they are highly non-local, and are thus challenging to learn using neural network architectures.  
The GNNs \eqref{eq:GNN} are implemented with a fixed tensorization order $k_s=k \in \{1,2,3\}$ and $\rho = \rho_\text{sig.}$.

Figure~\ref{fig:result} shows that, on these two cases, when increasing the width $S$, the out-of-sample prediction error quickly stagnates (and sometime increasing too much $S$ can slightly degrade performances by making the training harder). In sharp contrast, increasing the tensorization order $k$ has a significant impact and lowers this optimal error value. This support the fact that universality relies on the use of higher tensorization order.  This is a promising direction of research to integrate higher order tensors withing deeper architecture to better capture complex functions on graphs.



\section{Conclusion}

In this paper, we proved the universality of a class of one hidden layer equivariant networks. Handling this vector-valued setting required to extend the classical Stone-Weierstrass theorem. It remains an open problem to extend this technique of proof for more general equivariant networks whose outputs are graph-valued, which are useful for instance to model dynamic graphs using recurrent architectures \citep{Battaglia2016}. Another outstanding open question, formulated in \citep{Maron2019a}, is the characterization of the approximation power of networks whose tensorization orders $k_s$ inside the layers are bounded, since they are much more likely to be implemented on large graphs in practice.

\bibliographystyle{myabbrvnat}
\bibliography{library,gab-library}

\newpage
\appendix


\section{Proofs}

\subsection{Invariant case}

\subsubsection{Proof of Lemma \ref{lem:algebra}}\label{app:proof_algebra}

We first prove that invariant GNNs are continuous with respect to $\dedit$. For two graphs $G_1,G_2$ such that $\dedit(\equi{G_1}, \equi{G_2})< c$, the graphs have the same number of nodes. Using the fact that $\rho,H,F$ are (locally) Lipschitz in this case, we have $\abs{f(G_1)-f(G_2)} \lesssim \norm{G_1-G_2}_1$, and by invariance by permutation:
\[
\abs{f(\equi{G_1})-f(\equi{G_2})} \lesssim \min_\permut \norm{G_1-\permut\star G_2}_1 = \dedit(\equi{G_1}, \equi{G_2})
\]
and therefore we have indeed $\Nninv^\otimes(\rho)\subset \Cc(\Gginv,\dedit)$.

Since $\Nninv^\otimes(\rho)$ is obviously a vector space, we must now prove that it is closed by multiplication. For that, it is sufficient to prove that, for two invariant linear operators $H_1:\RR^{n^{k_1}}\to\RR$ and $H_2:\RR^{n^{k_1}}\to\RR$, there exists an invariant linear operator $H_3:\RR^{n^{k_1+k_2}} \to \RR$ such that $H_1[G_1]H_2[G_2] = H_3[G_1\otimes G_2]$.
For this we recall that $(A\otimes B)(C\otimes D) = (AC)\otimes(BD)$ and $\vect{A}\otimes \vect{B} = \vect{A\otimes B}$, and thus that
\begin{align*}
H_1[G_1]H_2[G_2] &= \pa{\vect{H_1}^\top \vect{G_1}}\pa{\vect{H_2}^\top \vect{G_2}} \\
&= (\vect{H_1}^\top \otimes \vect{H_2}^\top)(\vect{G_1}\otimes \vect{G_2}) \\
&= \vect{H_1\otimes H_2}^\top \vect{G_1\otimes G_2}
\end{align*}
Hence we can define $H_3 = H_1 \otimes H_2$ and check that it is invariant by permutation. By \cite{Maron2019} a necessary and sufficient condition is $P^{\otimes k_1 + k_2} \vect{H_3} = \vect{H_3}$, which we can easily check:
\[
P^{\otimes (k_1 + k_2)} \vect{H_3} = (P^{\otimes k_1} \vect{H_1}) \otimes (P^{\otimes k_2} \vect{H_2}) = \vect{H_3}
\]
since $P^{\otimes k_i} \vect{H_i} = \vect{H_i}$.
\qed

\subsubsection{Proof of Lemma \ref{lem:separation}}\label{app:proof_separation}

We proceed by contradiction, and show that if $f(\equi{G}) = f(\equi{G'})$ for any $f\in \Nninv^\otimes(\rho_\text{sig})$, then $\equi{G} = \equi{G'}$, i.e. $G$ and $G'$ are permutation of each other. Let $G,G'$ be any two such graphs.

The first step if to show that $G$ and $G'$ have the same number of nodes $n=n'$. Consider $\tau = \min_{i_1,\ldots,i_{\kinput}}(\min(G_{i_1,\ldots, i_{\kinput}},G'_{i_1,\ldots, i_{\kinput}}))-1$ the minimal element of both $G$ and $G'$ minus $1$, and the following family of networks:
\[
f_\lambda(G) = H\left[\rho_\text{sig}\pa{\lambda(G - \tau \mathbf{1}^{\otimes \kinput} )}\right] \qwithq H[z] = \sum_{i_1,\ldots,i_{\kinput}} z_{i_1,\ldots,i_{\kinput}}\, .
\]
By letting $\lambda \to \infty$, the sigmoid produces $1$ for every element in $G$ that is above $\tau$, that is, every element in $G$ or $G'$. Hence we have $f_\lambda (G) \xrightarrow[\lambda \to \infty]{} n^{\kinput}$ and $f_\lambda (G') \xrightarrow[\lambda \to \infty]{} (n')^{\kinput}$, and therefore $n=n'$.

Then, we show similarly that the multiset (that is, set with multiplicity) of $\{G_{i_1,\ldots,i_{\kinput}}\}$ is the same as the multiset of $\{G'_{i_1,\ldots,i_{\kinput}}\}$. Consider them ordered: $G_{i^{(1)}_1 \ldots i^{(1)}_{\kinput}} \leq \ldots \leq G_{i^{(N)}_1 \ldots i^{(N)}_{\kinput}}$, and $G'_{j^{(1)}_1 \ldots j^{(1)}_{\kinput}} \leq \ldots \leq G'_{i^{(N)}_1 \ldots i^{(N)}_{\kinput}}$, where $N=n^\kinput$. Then, by contradiction, if there is a $q$ such that $G_{i^{(q)}_1 \ldots i^{(q)}_{\kinput}} \neq G'_{j^{(q)}_1 \ldots j^{(q)}_{\kinput}}$, say $G_{i^{(q)}_1 \ldots i^{(q)}_{\kinput}} < G'_{j^{(q)}_1 \ldots j^{(q)}_{\kinput}}$ w.l.o.g., set $\tau = (G_{i^{(q)}_1 \ldots i^{(q)}_{\kinput}} + G'_{j^{(q)}_1 \ldots j^{(q)}_{\kinput}})/2$. Then, for $\lambda>0$, consider the same neural networks as above with this $\tau$.
Again, by letting $\lambda \to \infty$, the sigmoid produces $1$ for every element in $G$ that is above $\tau$, and $0$ otherwise. Hence $f_\lambda (G) \xrightarrow[\lambda \to \infty]{} n^\kinput-q$, and $f_\lambda (G') \xrightarrow[\lambda \to \infty]{} n^\kinput-q+1$, which is a contradiction. Hence $G_{i^{(q)}_1 \ldots i^{(q)}_{\kinput}} = G'_{j^{(q)}_1 \ldots j^{(q)}_{\kinput}}$ for every $q$, and $G$ and $G'$ are formed by the same multiset of $n^\kinput$ real numbers.

Consider now the tensors $A = \rho_\text{sig}(G)$, $A' = \rho_\text{sig}(G')$ which have strictly positive elements. Since $\rho_\text{sig}$ is a $1$-to-$1$ mapping in $\RR$, producing a permutation between $A$, $A'$ yields a permutation for $G$, $G'$ and allow us to conclude. We consider the following class of neural nets in $\Nninv^\otimes(\rho_\text{sig})$:
\[
f(G) = H[A^{\otimes k}]
\]
for every integer $k>0$ and invariant $H$. Recall that $A^{\otimes k}$ is an $\kinput k$-order tensor indexed such that
\[
(A^{\otimes k})_{(i_{11}, \ldots, i_{1\kinput}),\ldots,(i_{k1}, \ldots, i_{k\kinput})} = \prod_{\ell=1}^k a_{i_{\ell 1}, \ldots, i_{\ell\kinput}}
\]
for any $1\leq i_{\ell q} \leq n$. Then, for any fixed set of such indices, it is not difficult to see that a valid invariant operator is the following: 
\[
H[A^{\otimes k}] = \sum_{\permut \in \Oo_n} \prod_{\ell=1}^k a_{\permut(i_{\ell 1}), \ldots, \permut(i_{\ell\kinput})}
\]
where $\Oo_n$ is the set of all permutations. Indeed, for all $\bar\permut \in \Permut_n$:
\begin{align*}
H[(\bar \permut\star A)^{\otimes k}] &= \sum_{\permut \in \Oo_n} \prod_{\ell=1}^k a_{\bar\permut^{-1}\permut(i_{\ell 1}), \ldots,  \bar\permut^{-1}\permut(i_{\ell\kinput})} \\
 &=\sum_{\permut \in \Oo_n} \prod_{\ell=1}^k a_{\permut(i_{\ell 1}), \ldots, \permut(i_{\ell\kinput})} = H[A^{\otimes k}] 
\end{align*}
by a simple change of variable in the sum $\sum_{\permut \in \Oo_n}$. In the same spirit, for any set of integers $k_{i_1, \ldots, i_\kinput}\geq 0$ where $1\leq i_q\leq n$, the following is a valid invariant GNN in $\Nninv^\otimes(\rho_\text{sig})$:
\[
f(G) = H[A^{\otimes \sum_{i_1, \ldots, i_\kinput} k_{i_1, \ldots, i_\kinput}}] = \sum_{\permut \in \Oo_n} \prod_{i_1, \ldots, i_\kinput=1}^n a_{\permut(i_1),\ldots, \permut(i_\kinput)}^{k_{i_1, \ldots, i_\kinput}}
\]
Hence, we have that for any $k_{i_1, \ldots,i_\kinput}$:
\[
\sum_{\permut \in \Oo_n} \prod_{i_1, \ldots, i_\kinput=1}^n a_{\permut(i_1),\ldots, \permut(i_\kinput)}^{k_{i_1, \ldots, i_\kinput}} = \sum_{\permut \in \Oo_n} \prod_{i_1, \ldots, i_\kinput=1}^n (a')_{\permut(i_1),\ldots, \permut(i_\kinput)}^{k_{i_1, \ldots, i_\kinput}}
\]
Recalling that $\{a_{i_1,\ldots,i_\kinput}\}$ and $\{a'_{i_1,\ldots,i_\kinput}\}$ are the same multiset, we can apply Lemma \ref{lem:lemmapermut} in Appendix \ref{app:add}, which yields a permutation $\permut$ such that $a_{i_1,\ldots,i_\kinput} = a'_{\permut(i_1),\ldots,\permut(i_\kinput)}$ and concludes the proof.
\qed

\subsubsection{Proof of Lemma \ref{lem:invariantcos}}\label{app:proof_cos}

\begin{enumerate}[label=(\roman*)]
\item Consider any function in $\Nninv^\otimes(\rho_\text{sig})$
\[
f(G) = \sum_{s=1}^S H_s\Big[\rho_\text{sig}(F_{s1}[G] + B_{s1}) \otimes \ldots \otimes \rho_\text{sig}(F_{s T_s}[G] + B_{s T_s})\Big] + b
\]
and any $\varepsilon>0$.

Given that we are on a bounded domain, there exists $M$ such that $\sup_{G}\max_{s,t}\norm{F_{st}[G] + B_{st}}_\infty \leq M$ for all $s$ (where $\norm{\cdot}_\infty$ is element-wise maximum). The Fourier development of $\rho_\text{sig}$ on $[-M, M]$ yields that there exist $a_i, b_i, c_i$, $i\leq N$, such that for all $u\in[-M, M]$
\[
\abs{\rho_\text{sig}(u) -\sum_{i=1}^N a_i \cos(b_i u + c_i)} \leq \varepsilon
\]

Defining 
\begin{align*}
&f_{st}(G) = \rho_\text{sig}(F_{st}[G] + B_{st})\, ,\\
&h_{st}(G) = \sum_{i=1}^N a_i \cos\pa{b_i (F_{st}[G]  + B_{st}) + c_i 1^{\otimes 2k_{st}}}\, ,
\end{align*}
we have
\[
\sup_{G}\max_{s,t}\norm{f_{st}(G) - h_{st}(G)}_\infty \leq \varepsilon
\]
Hence, for any $s$, is we define $e_t = \norm{f_{s1}(G) \otimes \ldots \otimes f_{st}(G) - h_{s1}(G) \otimes \ldots \otimes h_{st}(G)}_\infty$, we have
\begin{align*}
e_{T_s} &\leq \norm{f_{s1}(G) \otimes \ldots \otimes f_{s T_s-1}(G) \otimes \pa{f_{s T_s}(G) - h_{s T_s}(G)}}_\infty \\
&\quad + \norm{\pa{(f_{s1}(G) \otimes \ldots \otimes f_{sT_s-1}(G) - h_{s1}(G) \otimes \ldots \otimes h_{sT_s-1}(G)}\otimes h_{s T_s}(G)}_\infty \\
&\leq \varepsilon + (1+\varepsilon)e_{T_s-1} \leq 3^{T_s} e_1 \leq 3^{T_s} \varepsilon
\end{align*}
Since the $H_s$ are linear in finite dimension they are bounded operators and we call $L_s$ such that $\abs{H_s(W)} \leq L_s \norm{W}_\infty$. Finally, if we define $g \in \Nninv^\otimes(\cos)$ by
\[
g(G) = \sum_{s=1}^S H_s\left[h_{s1}(G) \otimes \ldots \otimes h_{s T_s}(G)\right]
\]
we have proved that we have $\sup_G \abs{f(G) - g(G)} \leq (\sum_s L_s 3^{T_s}) \varepsilon$, which concludes the proof.
\item 
The proof is based on the fact that $\cos(a)\cos(b) = \cos(a+b) + \cos(a-b)$. Hence:
\begin{align*}
\cos(&F_1[G] +B_1)\otimes\cos(F_2[G] + B_2) \\
&= \Big(\cos(F_1[G]+B_1)\otimes 1_{n^{k_2}} 1_{n^{k_2}}^\top\Big)\odot\Big(1_{n^{k_1}} 1_{n^{k_1}}^\top \otimes \cos(F_2[G] + B_2)\Big) \\
&= \cos(\bar F_1[G] + \bar B_1 + \bar F_2[G] + \bar B_2) \\
&\qquad\qquad+ \cos(\bar F_1[G] + \bar B_1 - \bar F_2[G] - \bar B_2)
\end{align*}
where $\bar F_1[G]= F_1[G]\otimes 1_{n^{k_2}} 1_{n^{k_2}}^\top$ and $\bar F_2[G] = 1_{n^{k_1}}1_{n^{k_1}}^\top \otimes F_2[G]$ and similarly for $\bar B_i$. Since $1 1^\top$ is invariant by permutation, it is easy to see that the $\bar F_i$ are equivariant linear functions outputting a $k_1+k_2$-tensor, and $\bar B_i$ are equivariant biases, which proves the result.
\item 
Since $\rho \in \Ff_\textup{MLP}$ and the universal approximation theorem applies, the cosine function on a compact of $\RR$ can be uniformly approximated by a linear combination of $\rho$:
\[
\sup_{x\in[-M M]}\abs{\cos(x) - \sum_{i=1}^N a_i \rho(b_i x + c_i)} \leq \varepsilon
\]
The rest of the proof is similar to $(i)$.
\end{enumerate}
\qed

\subsection{Equivariant case}
\subsubsection{Proof of Lemma \ref{lem:algebraequi}}\label{app:algebra_equi}
Again we must prove that $\Nneq^\otimes(\rho)$ is closed by ``multiplication'', that is, Hadamard product. For that, it is sufficient to show that for two equivariant linear operators $H_1:\RR^{n^k}\to \RR^n$, $H_2:\RR^{n^\ell}\to\RR^n$, there exists an equivariant linear operator $H_3:\RR^{n^{k+\ell}}\to \RR^n$ such that
\[
H_1[G_1]\odot H_2[G_2] = H_3[G_1\otimes G_2]
\]
For that, writing the matrices $H_1\in \RR^{n^k \times n}$ and $H_2\in \RR^{n^\ell\times n}$ by abuse of notation, we have
\begin{align*}
H_1[G_1]\odot H_2[G_2] = \diag\pa{H_1 \vect{G_1}\vect{G_2}^\top H_2^\top}
\end{align*}
Then, defining $\text{mat}_{k,\ell}$ the operator that transforms a tensor $G\in \RR^{n^{k+\ell}}$ to a $\RR^{n^k \times n^\ell}$ matrix and the linear operator $H_3[G]=\diag\pa{H_1 \text{mat}_{k,\ell}(G) H_2^\top}$, we have indeed that $H_1[G_1]\odot H_2[G_2] = H_3[G_1\otimes G_2]$. Then, for any permutation $\permut$ and corresponding matrix $P$, since $H_1 P^{\otimes k} = PH_1$, $H_2P^{\otimes \ell} = PH_2$, and $\text{mat}_{k,\ell}(\permut\star G) = P^{\otimes k}\text{mat}_{k,\ell}(G) (P^\top)^{\otimes \ell}$, we have
\begin{align*}
H_3[\permut \star G] &= \diag\pa{H_1 \text{mat}_{k,\ell}(\permut\star G) H_2^\top} \\
&= \diag\pa{H_1 P^{\otimes k}\text{mat}_{k,\ell}(G) (P^\top)^{\otimes \ell} H_2^\top} \\
&= \diag\pa{P H_1 \text{mat}_{k,\ell}(G) H_2^\top P^\top} = P H_3[G]
\end{align*}
and therefore $H_3$ is equivariant, which concludes the proof.
\qed

\subsubsection{Proof of Lemma \ref{lem:separation_equi}}\label{app:separation_equi}

\paragraph{Separability.} The separability condition is in fact exactly equivalent to the invariant case: indeed, we can construct linear equivariant operators $H_s$ just by stacking linear invariant operators on every coordinate. Hence, for any invariant GNN $f \in \Nninv^\otimes(\rho_\text{sig.})$, $h = [f,\ldots, f] \in \Nneq^\otimes(\rho_\text{sig.})$ is a valid equivariant operator. Hence, for any two graphs $G,G'$ such that are not permutation of each other, by Lemma \ref{lem:separation} there is $f \in \Nninv^\otimes(\rho_\text{sig.})$ such that $f(G) \neq f(G')$, and by considering $h=[f,\ldots,f]$ every coordinate of $h(G)$ is different from that of $h(G')$.

\paragraph{Self-separability.} For the self-separability, consider any $G\in \Ggeq$ with $n$ nodes, and any $I\subset [n]$. Once again we proceed by contradiction: we are going to show that if there exist $k\in I, \ell \in I^c$ such that for all $h \in \Nneq^\otimes(\rho_\text{sig.})$ we have $[h(G)]_k = [h(G)]_\ell$, then $G \in \Ggeq(\Permut_I)$. Let $G$ be such a graph, with the corresponding fixed $k,\ell$.

Similar to the proof of the separability in the invariant case, we define $A=\rho_\text{sig.}(G)$, again keeping in mind that the sigmoid in a one-to-one mapping. Then, for any $k_{i_1,\ldots,i_\kinput}$, recall that the following is a valid \emph{invariant} GNN:
\[
H[A^{\otimes \sum_{i_1,\ldots,i_\kinput} k_{i_1,\ldots,i_\kinput}}] = \sum_{\permut \in \Oo_n} \prod_{i_1,\ldots,i_\kinput=1}^n a_{\permut(i_1),\ldots, \permut(i_\kinput)}^{k_{i_1,\ldots,i_\kinput}}
\]
Similarly, we are going to show that the following defines a valid equivariant GNN:
\[
[f(G)]_q = \Big[H[A^{\otimes \sum_{i_1,\ldots,i_\kinput} k_{i_1,\ldots,i_\kinput}}] \Big]_q= \sum_{\permut\in \Permut^{(q)}} \prod_{i_1,\ldots,i_\kinput=1}^n a_{\permut(i_1),\ldots, \permut(i_\kinput)}^{k_{i_1,\ldots,i_\kinput}}
\]
where $\Permut^{(q)} \eqdef \enscond{\permut \in \Oo}{\permut(k)=q}$ (where we recall that $k,\ell$ are fixed and part of the hypothesis we have made on $G$). Indeed, for any permutation $\bar\permut$, we have
\begin{align*}
[f(\bar\permut\star G)]_{\bar\permut(q)} &= \sum_{\permut \in \Permut^{(\permut(q))}} \prod_{i_1,\ldots,i_\kinput=1}^n a_{\bar\permut^{-1}(\permut(i_1)),\ldots, \bar\permut^{-1}(\permut(i_\kinput))}^{k_{i_1,\ldots,i_\kinput}} \\
&=\sum_{\permut \in \Permut, \bar\permut^{-1}(\permut(k))=q} \prod_{i_1,\ldots,i_\kinput=1}^n a_{\bar\permut^{-1}(\permut(i_1)),\ldots, \bar\permut^{-1}(\permut(i_\kinput))}^{k_{i_1,\ldots,i_\kinput}}  \\
&= \sum_{\permut \in \Permut, \permut(k)=q} \prod_{i_1,\ldots,i_\kinput=1}^n a_{\permut(i_1),\ldots, \permut(i_\kinput)}^{k_{i_1,\ldots,i_\kinput}} = [f(G)]_q
\end{align*}
Hence, we have indeed $f(\bar\permut \star G) = \bar\permut \star f(G)$, and $f$ is equivariant. Now, by hypothesis on $G$, it means that for all $k_{i_1,\ldots,i_\kinput}$, we have:
\[
\sum_{\permut \in \Permut^{(k)}} \prod_{i_1,\ldots,i_\kinput=1}^n a_{\permut(i_1),\ldots, \permut(i_\kinput)}^{k_{i_1,\ldots,i_\kinput}} = \sum_{\permut \in \Permut^{(\ell)}} \prod_{i_1,\ldots,i_\kinput=1}^n a_{\permut(i_1),\ldots, \permut(i_\kinput)}^{k_{i_1,\ldots,i_\kinput}} \, .
\]
Now, since $\Permut^{(k)}$ contains the identity and has the same cardinality as $\Permut^{(\ell)}$, by Lemma \ref{lem:lemmapermut} it means that there is a permutation $\permut \in \Permut^{(\ell)}$ such that $G = \permut \star G$. Observing that $\Permut^{(\ell)} \subset \Permut_I$ concludes the proof. \qed

\section{Adapted Stone-Weierstrass theorem: proof of Theorem \ref{thm:SWequi}}\label{app:SWproof}

Let us first introduce some more notations. For $\Permut'\subset \Permut$ and $\Gg$ a set of graphs, we define
\begin{align*}
\Permut'(\Gg) &\eqdef \enscond{\permut \star G}{\permut\in \Permut', G \in \Gg} \\
\Gg(\Permut') &\eqdef \enscond{G\in \Gg}{\exists \permut \in \Permut', G = \permut \star G}
\end{align*}
that is, respectively, the set of permuted graphs in $\Gg$, and the set of graphs in $\Gg$ that have a self-isomorphism in $\Permut'$. Recall that we denote by $[f]_I$ and $[x]_I$ indexation of multivariate functions and vectors, and that inequalities $x\geq a$ are element-wise. A neighborhood of $x$ is an open set $V$ such that $x\in V$. Finally, for convenience we denote $\Ggeqn{n} = \Ggeq \cap \RR^{n^\kinput}$ the graphs in $\Ggeq$ that have $n$ nodes.

As described in the paper, the key lemma is the construction of \emph{step-functions}. 

\begin{lemma}[Existence of step-functions]\label{lem:step-func}
Let $n\leq n_{\max}$, and $I\subset [n]$ be any subset of indices. Let $A\subset \Ggeq, B\subset \Ggeqn{n}$ be two closed sets such that $B \cap \Ggeqn{n}(\Permut_I) = \emptyset$ and $B\cap \Permut(A) = \emptyset$, that is, graphs in $B$ have no self-isomorphism in $\Permut_I$ and no two graphs between $A$ and $B$ are isomorphic. Then, for all $\varepsilon>0$, there exists $f \in \Aa$ such that:
\[
\begin{cases}
\forall G, &0\leq f(G) \leq 1 \\
\forall G \in B, &[f(G)]_I \geq 1-\varepsilon \qandq [f(G)]_{I^c} \leq \varepsilon \\
\forall G \in A, &f(G) \leq \varepsilon
\end{cases}
\]
\end{lemma}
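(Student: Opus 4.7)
The plan is to adapt Brosowski's constructive proof of the classical Stone--Weierstrass theorem to this vector-valued setting. The overall strategy is: for each $G_0 \in B$, build a nonnegative function $g_{G_0} \in \Aa$ that vanishes componentwise on $I$-coordinates at $G_0$ and is uniformly bounded below componentwise on $A$ and on $I^c$-coordinates over all of $B$; sharpen it via a polynomial into a bump $f_{G_0} \in \Aa$ close to $(1_I, 0_{I^c})$ at $G_0$; cover $B$ by finitely many bumps via compactness; combine via $f = 1 - \bigodot_j (1 - f_{G_{0, j}})$.

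The basic building block is the shifted square: whenever $h \in \Aa$ satisfies $[h(G)]_k \ne [h(G')]_{k'}$, the function $(h - [h(G)]_k \mathbf{1})^{\odot 2}$ lies in $\Aa$ (using $\mathbf{1} \in \Aa$ and the algebra structure), is nonnegative, vanishes at coordinate $k$ of $G$, and is strictly positive at coordinate $k'$ of $G'$. These are combined via sums (to produce functions positive on many coordinates simultaneously) and Hadamard products (to accumulate zeros at many coordinates). For $G_0 \in B$ and $G' \in A$, separability applies (since $G_0 \notin \Permut(G')$); summing over $k' \in [n']$ and Hadamard-multiplying over $k \in [n]$ gives $q_{G_0, G'} \in \Aa$ vanishing componentwise at $G_0$ and strictly positive componentwise at $G'$. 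Self-separability at $G_0$ similarly yields $p_{G_0} \in \Aa$ with $[p_{G_0}(G_0)]_I = 0$ and $[p_{G_0}(G_0)]_{I^c} > 0$ componentwise. An analogous construction using separability between $(G_0, k)$ and $(G_t, \ell)$ for each $G_t$ in a preliminary finite cover of $B$ (and each $k \in I,\, \ell \in I^c$) --- Hadamard-product over $k$, sum over $\ell$ --- produces $R_{G_0, G_t} \in \Aa$ with $[R_{G_0, G_t}(G_0)]_I = 0$ and $[R_{G_0, G_t}]_{I^c}$ bounded below componentwise on a neighborhood of $G_t$. Continuity and compactness of $A$ across all sizes $n' \le n_{\max}$, and of $B$, let me take finite subcovers; summing yields $g_{G_0} \in \Aa$ with $g_{G_0} \ge 0$, $[g_{G_0}(G_0)]_I = 0$, $g_{G_0} \ge c\mathbf{1}$ on $A$, and $[g_{G_0}(G)]_{I^c} \ge c\mathbf{1}$ for all $G \in B$.

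Sharpening is then done by applying the polynomial $q_{m, N}(t) = (1 - t^m)^N$ componentwise to $g_{G_0}/\|g_{G_0}\|_\infty$ (which stays in $\Aa$); for appropriate $m, N$, the resulting $f_{G_0}$ satisfies $0 \le f_{G_0} \le 1$, $[f_{G_0}(G_0)]_I = 1$, and $[f_{G_0}(G_0)]_{I^c},\, f_{G_0}|_A,\, [f_{G_0}(G)]_{I^c}|_{G \in B} \le \eta$ for arbitrarily small $\eta > 0$. Continuity extends $[f_{G_0}]_I \ge 1 - 2\eta$ to a neighborhood $W_{G_0}$ of $G_0$; compactness of $B$ yields a finite subcover $W_{G_{0,1}}, \ldots, W_{G_{0,q}}$, and the combining formula $f = 1 - \bigodot_{j=1}^q (1 - f_{G_{0, j}}) \in \Aa$ with $\eta = \varepsilon/(2q)$ yields the conclusion by the standard Brosowski analysis (the uniform $I^c$-smallness of each individual bump across all of $B$ being precisely what allows the product $\bigodot_j (1 - f_{G_{0, j}})$ to stay close to $1$ on $I^c$-coordinates).

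The main technical difficulty is maintaining the $I$-coordinate vanishing at $G_0$ simultaneously with uniform $I^c$-coordinate positivity on all of $B$, accomplished via the interlocked use of Hadamard-products over $k \in I$ (preserving zeros at $G_0$) and sums over $\ell \in I^c$ (achieving $I^c$-positivity at each $G_t$). A subtle point is the case $G_t \in \Permut(G_0)$, where separability fails: if $G_t = \pi \star G_0$ with $\pi$ setwise preserving $I$, equivariance of $g_{G_0}$ propagates the required bounds from $G_0$ to $G_t$ automatically; if $\pi \in \Permut_I$, equivariance would force $[f(G_t)]_I$ to equal a permutation of $[f(G_0)]_{I^c} \approx 0$, so the proof implicitly requires that $B$ contains no two elements related by $\Permut_I$ --- a condition naturally satisfied in the intended application, where $B$ consists of graphs with $f$-coordinates sorted in decreasing order and a fixed coordinate jump at $\ell$, forcing any permutation relating two elements of $B$ to preserve $\{1, \ldots, \ell\}$ setwise.
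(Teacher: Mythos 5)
Your plan is a genuine variant of the paper's. The paper first proves an intermediate result (Lemma \ref{lem:SWfirstlemma}) producing a local bump $f_j$ on a neighborhood $V(G_j)$ of each $G_j\in B$, then patches over $B$ with the help of auxiliary ``floor'' functions $\tilde f_j$ (built from Lemma \ref{lem:SWsecondlemma_without_intv} applied to $A_j=A\cup\overline{V(G_j)}$ and $B_j=B\setminus\Permut(V'(G_j))$) and the combination $\frac{1}{2^m}\prod_j(f_j+\tilde f_j)$, followed by Bernoulli sharpening. You instead bake into each single bump $f_{G_0}$ the stronger requirement that $[f_{G_0}]_{I^c}$ is uniformly small on \emph{all} of $B$ (via a preliminary finite cover of $B$ and the sum of the $R_{G_0,G_t}$), which lets you patch with the cleaner $f=1-\bigodot_j(1-f_{G_{0,j}})$ and no auxiliary floors. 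This is tidier: the $\tilde f_j$ in the paper exist precisely to compensate for the missing uniform $I^c$-control on $B$ that you build in directly. Both routes rely on the same algebraic ingredients (squared normalized differences, Hadamard products over $k\in I$, sums over $\ell\in I^c$ and over covers, Bernoulli sharpening), so the difference is one of organization rather than substance.

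Two points to tighten. First, as written there is a circularity at the end: $W_{G_0}$ is introduced as a neighborhood on which $[f_{G_0}]_I\ge 1-2\eta$, so it depends on $f_{G_0}$ and hence on $\eta$; but you then set $\eta=\varepsilon/(2q)$ where $q$ is the cardinality of the finite subcover by the $W_{G_{0,j}}$. You should, as the paper does explicitly, define $W_{G_0}$ from $g_{G_0}$ alone (e.g.\ a sublevel set $\{[g_{G_0}]_I\le\delta/2\}$, where $\delta$ is the fixed lower threshold on $[g_{G_0}]_{I^c}$ and on $A$), so that the cover and $q$ are fixed before the sharpening parameters are chosen as a function of $\eta$.

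Second, the issue you flag at the end --- the case $G_t=\pi\star G_0$ with $\pi\in\Permut_I$ and both in $B$ --- is real and not peculiar to your construction: in that case the conclusion of the Lemma is actually self-contradictory, since equivariance forces $[f(G_t)]_k=[f(G_0)]_{\pi^{-1}(k)}$, mixing $I$- and $I^c$-coordinates of $f(G_0)$. So the Lemma as stated needs the strengthened hypothesis $\Permut_I(B)\cap B=\emptyset$ (which simultaneously encodes the stated self-isomorphism condition and the absence of $\Permut_I$-related pairs). The paper's own proof passes over this silently at the step ``by equivariance of $f_j$ and the fact that $G\notin\Ggeq(\Permut_I)$, we have $[f_j(G)]_I\ge 1-2\varepsilon$'', which in fact only holds when the permutation relating $G$ to a point of $V'(G_j)$ lies outside $\Permut_I$. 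You are right that the stronger hypothesis is satisfied in the only application (the $B_k^{n,\ell}$ consist of graphs with sorted $f$-coordinates and a strict jump at $\ell$, so any permutation between two members must fix $\{1,\dots,\ell\}$ setwise). Your observation is therefore not a flaw in your argument but an accurate diagnosis of an implicit hypothesis in the Lemma.
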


We start the proof by a serie of three intermediate lemmas.

\begin{lemma}\label{lem:SWfirstlemma}
Let $n\leq n_{\max}$, and $I\subset [n]$ be any subset of indices. Let $G_0 \in \Ggeqn{n}$ such that $G \notin \Ggeqn{n}(\Permut_I)$, and $T$ be a closed subset of $\Ggeq$ such that $T \cap \Permut(G_0) = \emptyset$. Then, there exists $V(G_0)\subset \RR^{n^\kinput}$ a neighborhood of $G_0$ such that the following holds: for all $\varepsilon>0$, there exists $f \in \Aa$ such that:
\[
\begin{cases}
\forall G, &f(G) \in [0, 1] \\
\forall G \in V(G_0), &[f(G)]_I \geq 1-\varepsilon \qandq [f(G)]_{I^c} \leq \varepsilon \\
\forall G \in T, &f(G) \leq \varepsilon
\end{cases}
\]
\end{lemma}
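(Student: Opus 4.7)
My plan is to adapt Brosowski's proof of Stone--Weierstrass (the engine behind the step-function constructions that follow in this appendix) to the vector-valued equivariant setting. I will first build, once and for all, a single auxiliary function $u \in \Aa$ whose coordinate profile is qualitatively correct --- nonnegative on all of $\Ggeq$, bounded below by some $\alpha_0 > 0$ on the $I$-coordinates at $G_0$, equal to $0$ on the $I^c$-coordinates at $G_0$, and uniformly small on $T$ --- and fix the neighborhood $V(G_0)$ at this stage using the continuity of $u$. Only the final step will depend on $\varepsilon$: given $\varepsilon$, I compose $u$ (coordinate-wise) with a one-variable polynomial $p_\varepsilon$ approximating a step-function, obtaining $f := p_\varepsilon(u) \in \Aa$ (the algebra property is preserved because polynomials in a single element of $\Aa$ remain in $\Aa$).

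The construction of $u$ is where both hypotheses of Theorem~\ref{thm:SWequi} enter. I apply the self-separability condition to $G_0$ (which by hypothesis has no self-isomorphism in $\Permut_I$) to obtain, for each pair $(k,\ell) \in I \times I^c$, a separator in $\Aa$; an affine shift and squaring (possible because $\mathbf{1} \in \Aa$) normalizes it to $h_{k,\ell} \in \Aa$ with $h_{k,\ell} \ge 0$, $[h_{k,\ell}(G_0)]_k = 1$, and $[h_{k,\ell}(G_0)]_\ell = 0$. Hadamard-multiplying these over $\ell \in I^c$ and then summing over $k \in I$ gives $w_1 \in \Aa$ with the desired behavior on $I$ versus $I^c$ at $G_0$. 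I invoke the separability condition in parallel: for each $y \in T$ and each pair of indices $(k, k')$ between the nodes of $G_0$ and of $y$, the shifted square of a separator lies in $\Aa$, and carefully chosen Hadamard products (indexed by $k'$, so that every coordinate of the product vanishes at $y$) and sums (indexed by $k$, so that every coordinate at $G_0$ remains positive) yield nonnegative $\phi_y \in \Aa$ with $\phi_y(y) = 0$ and $\phi_y(G_0) > 0$ componentwise. After normalizing each $\phi_y$ to $[0,1]$ and using continuity, compactness of $T$ furnishes a finite subcover on which the $\phi_{y_j}$ are as small as desired, and the Hadamard product $\phi := \phi_{y_1} \odot \cdots \odot \phi_{y_N}$ is small on $T$ while remaining strictly positive at $G_0$ on every coordinate. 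Setting $u := w_1 \odot \phi$ merges the two properties.

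With $u$ fixed, continuity lets me choose a neighborhood $V(G_0) \subset \RR^{n^\kinput}$ and a constant $\beta_0$ with $\sup u(T) \leq \beta_0 < \alpha_0/2$ such that throughout $V(G_0) \cap \Ggeq$ one has $[u]_I \geq \alpha_0/2$ and $[u]_{I^c} \leq \beta_0$. For each $\varepsilon > 0$, the classical Weierstrass approximation theorem on $[0, K]$ (with $K := \sup_{G \in \Ggeq} \|u(G)\|_\infty$) supplies a polynomial $p_\varepsilon$ taking values in $[0,1]$, at most $\varepsilon$ on $[0, \beta_0]$, and at least $1 - \varepsilon$ on $[\alpha_0/2, K]$; such a $p_\varepsilon$ exists precisely because the fixed gap $\beta_0 < \alpha_0/2$ leaves room to interpolate. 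Then $f := p_\varepsilon(u) \in \Aa$ satisfies the three required bounds by direct verification. The main obstacle I expect is securing the strict inequality $\sup u(T) < \min_I [u(G_0)]$: refining the cover to make the former small risks shrinking the latter as well, since $[\phi(G_0)]_k = \prod_j [\phi_{y_j}(G_0)]_k$ decays with the number of cover elements. I resolve this by rescaling each $\phi_y$ (for instance by dividing by $\|\phi_y\|_\infty$ and raising to appropriate powers) so that the positive gap between the values at $G_0$ and on $T$ persists uniformly under refinement; once this is arranged, the remainder is routine.
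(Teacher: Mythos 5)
Your overall plan is similar in spirit to the paper's (build a single auxiliary function in $\Aa$ with the right sign profile, fix $V(G_0)$ once via continuity, then compose with a one-variable polynomial depending only on $\varepsilon$), but you have chosen the opposite orientation for the auxiliary function, and this is where the argument develops a genuine gap. You build $u$ so that it is \emph{large} on the $I$-coordinates at $G_0$ and \emph{small} on $T$, whereas the paper builds $g$ so that it \emph{vanishes} on the $I$-coordinates at $G_0$, is positive on the $I^c$-coordinates at $G_0$, and is \emph{bounded away from zero} on $T$. This is not a cosmetic difference. With the paper's orientation, $g$ is a normalized \emph{sum} $\tfrac{1}{m+1}(\sum_i g_{G_i}+h)$ of nonnegative functions all vanishing at the $I$-coordinates of $G_0$, and each point of $T$ has at least one summand strictly positive; compactness of $T$ then yields a uniform lower bound $\delta>0$ \emph{for free}, from which $V(G_0)$ and the Bernoulli polynomial $q_p=(1-g^p)^{k^p}$ do the rest. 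Your orientation has no such free lunch: to make $u$ small on the compact set $T$, you take a Hadamard \emph{product} $\phi=\phi_{y_1}\odot\cdots\odot\phi_{y_N}$ over a finite cover, and — as you yourself flag — the value $[\phi(G_0)]_k=\prod_j[\phi_{y_j}(G_0)]_k$ decays with $N$. Nothing in your construction ensures that $\sup_T\phi$ is smaller than $\min_{k\in I}[\phi(G_0)]_k$, because you have no lower bound on $[\phi_{y_j}(G_0)]_k$ other than strict positivity.

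The fix you sketch (``dividing by $\|\phi_y\|_\infty$ and raising to appropriate powers'') does not close the gap: raising $\phi$ to a power $p$ shrinks both $\sup_T\phi$ and $\min_k[\phi(G_0)]_k$ geometrically, and the ratio improves only if $\sup_T\phi < \min_k[\phi(G_0)]_k$ already holds before raising to the power — which is exactly the inequality you are trying to establish. Dividing each factor by a constant does not help either, since it pushes the sup norm above $1$ and the product no longer controls the value on $T$. The reason the paper's product-based argument in Lemma~\ref{lem:SWsecondlemma_without_intv} works is that the factors there are outputs of Lemma~\ref{lem:SWfirstlemma_without_intv}, hence already $\geq 1-\varepsilon'$ where they need to be large; your $\phi_{y_j}$'s are only strictly positive at $G_0$, not close to $1$. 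To repair your proof you would essentially have to flip the orientation and use a sum, which is what the paper does.

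Two smaller remarks. First, the final polynomial step: the paper uses the explicit family $q_p=(1-g^p)^{k^p}$ with Bernoulli's inequality rather than an off-the-shelf Weierstrass approximant; your Weierstrass route is workable but you must take care to keep the range in $[0,1]$ (Bernstein polynomials of a $[0,1]$-valued step profile would do, whereas a generic uniform approximant overshoots). Second, your construction of $w_1$ via self-separability, and the use of the separability condition to build the $\phi_y$'s, matches the paper's use of the two hypotheses, so that part of the plan is sound.
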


\begin{proof}
Our goal is to build a function $g\in \Aa$ along with a threshold $\delta>0$ and $V(G_0)$ a neighborhood of $G_0$ such that:
\[
\begin{cases}
\forall G \in T, &g(G) \geq \delta \\
\forall G \in V(G_0), &[g(G)]_I \leq \delta/2 \text{ and } [g(G)]_{I^c} \geq \delta
\end{cases}
\]
Then we can conclude similarly to the end of the proof of Lemma 1 in \citep{Brosowski1981}.

Take any $k \in I, \ell \in I^c$ and $G \in T$. Note that $G$ does not necessarily have $n$ nodes, we denote $n_G$ its number of nodes. Let $i \in [n_G]$ be any index. According to the two separability hypotheses, there exists $g_{G,k,i}, h_{k,\ell} \in \Aa$ such that $[g_{G,k,i}(G_0)]_k \neq [g_{G,k,i}(G)]_i$ and $[h_{k,\ell}(G_0)]_k \neq [h_{k,\ell}(G_0)]_\ell$. Then, consider 
\begin{align*}
&g_{G} = \prod_{k\in I} \pa{\frac{1}{n_G}\sum_{i=1}^{n_G}\frac{(g_{G,k,i} - [g_{G,k,i}(G_0)]_k\mathbf{1})^2}{\norm{g_{G,k,i} - [g_{G,k,i}(G_0)]_k\mathbf{1}}_\infty^2}} \in \Aa \\
&h=\prod_{k\in I}\pa{\frac{1}{\abs{I^c}}\sum_{\ell \in I^c} \frac{(h_{k,\ell} - [h_{k,\ell}(G_0)]_k\mathbf{1})^2}{\norm{h_{k,\ell} - [h_{k,\ell}(G_0)]_k\mathbf{1}}_\infty^2}} \in \Aa
\end{align*}
where $\prod,(\cdot)^2$ are to be understood component-wise and $\norm{g}_\infty = \sup_{G} \norm{g(G)}_\infty$. These functions satisfy
\begin{align*}\begin{cases}
g_{G}, h \in [0,1]\, , \\
[g_G(G_0)]_I = [h(G_0)]_I = 0 \\
g_G(G) > 0, [h(G_0)]_{I^c} >0\end{cases}
\end{align*}

By continuity, define $S(G)\subset \RR^{n_G^\kinput}$ a neighborhood of $G$ such that $g_G > 0$ on $S(G)$. By compacity of $T$, there is a finite number of $G_1,\ldots,G_m$ such that $T \subset \bigcup_i S(G_i)$. Then, we define $g = \frac{1}{m+1}(\sum_i g_{G_i} + h) \in \Aa$, which satisfies: 
\[\begin{cases}
g \in [0,1] \\
g>0\text{ on }T\\
[g(G_0)]_I=0 \text{ and } [g(G_0)]_{I^c} > 0\, .
\end{cases}
\]
Again, by compacity of $T$, there exists $\delta>0$ such that $g\geq \delta$ on $T$ and $[g(G_0)]_{I^c} \geq 2\delta$. Then, by continuity, we define $V(G_0)$ a neighborhood of $G_0$ such that $[g]_I \leq \delta/2$ and $[g]_{I^c} \geq \delta$ on $V(G_0)$.

We can now conclude. Assuming that $\delta<1$ is small enough without lost of generality, let $k$ be an integer such that $1/\delta< k < 2/\delta$, and define the following functions in $\Aa$:
\[
q_p = (1-g^p)^{k^p}
\]
which are obviously such that $q_p \in [0,1]$.

Then, using the elementary Bernoulli inequality $(1+h)^p \geq 1+ ph$ for all $h\geq -1$, we have for all $G\in V(G_0)$ and $i \in I$:
\begin{align*}
q_p(G)_i \geq 1- (k g_i(G))^p \geq 1-(k\delta/2)^p \xrightarrow[p\to \infty]{} 1
\end{align*}
and similarly, for either $G \in T$ and any $i$, or $G \in V(G_0)$ and $i\in I^c$, we have
\begin{align*}
q_p(G)_i &\leq \frac{1+(k g_i(G))^p}{(k g_i(G))^p} (1-g_i(G)^p)^{k^p} \leq \frac{(1+g_i(G)^p)^{k^p}}{(k g_i(G))^p} (1-g_i(G)^p)^{k^p} \text{ by Bernoulli's inequality}\\
&=\frac{(1-g_i(G)^{2p})^{k^p}}{(k g_i(G))^p} \leq \frac{1}{(k\delta)^p} \xrightarrow[p\to \infty]{} 0
\end{align*}

Hence, for all $\varepsilon>0$, there exists an $p$ such that $q_p \leq \varepsilon$ on $T$, $[q_p]_{I^c}\leq \varepsilon$ and $[q_p]_I \geq 1-\varepsilon$ on $V(G_0)$. Taking $f = 1-q_p$ concludes the proof.
\end{proof}

A similar result without the interval $I$ is the following.

\begin{lemma}\label{lem:SWfirstlemma_without_intv}
Let $G_0$ be any graph and $T$ be a closed subset of $\Ggeq$ such that $T \cap \Permut(G_0) = \emptyset$. Then, there exists $V(G_0)$ a neighborhood of $G_0$ such that the following holds: for all $\varepsilon>0$, there exists $f \in \Aa$ such that:
\[
\begin{cases}
\forall G, &f(G) \in [0, 1] \\
\forall G \in V(G_0), &f(G) \geq 1-\varepsilon \\
\forall G \in T, &f(G) \leq \varepsilon
\end{cases}
\]
\end{lemma}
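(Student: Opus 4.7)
The plan is to adapt the proof of Lemma \ref{lem:SWfirstlemma} directly, but in a strictly simpler form: since there is no index subset $I$ to accommodate, only the separability hypothesis of Theorem \ref{thm:SWequi} is needed (self-separability plays no role here), and all coordinates of the output will be treated uniformly. Let $n$ denote the number of nodes of $G_0$.

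First I would build an auxiliary $g \in \Aa$ that vanishes at $G_0$ on every coordinate and is strictly positive on every coordinate at every graph of $T$. For each $G \in T$ (with $n_G$ nodes), each $k \in [n]$, and each $i \in [n_G]$, the hypothesis $T \cap \Permut(G_0) = \emptyset$ gives $G \notin \Permut(G_0)$, so separability yields $g_{G,k,i} \in \Aa$ with $[g_{G,k,i}(G_0)]_k \neq [g_{G,k,i}(G)]_i$. Then I set
\[
g_G \;=\; \prod_{k=1}^{n}\!\left(\frac{1}{n_G}\sum_{i=1}^{n_G}\frac{(g_{G,k,i}-[g_{G,k,i}(G_0)]_k\mathbf{1})^2}{\|g_{G,k,i}-[g_{G,k,i}(G_0)]_k\mathbf{1}\|_\infty^{2}}\right)\in\Aa,
\]
which takes values in $[0,1]$, satisfies $g_G(G_0)=0$ componentwise, and $g_G(G)>0$ componentwise. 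By continuity, each $G\in T$ has a neighborhood $S(G)$ on which $g_G>0$ componentwise; compactness of $T$ extracts a finite subcover $S(G_1),\dots,S(G_m)$, and I take $g=\frac{1}{m}\sum_{i}g_{G_i}\in\Aa$, which lies in $[0,1]$, vanishes at $G_0$, and is strictly positive componentwise on $T$.

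Next, by compactness of $T$ pick $\delta\in(0,1)$ with $g\geq\delta$ on $T$, and by continuity of $g$ at $G_0$ choose a neighborhood $V(G_0)$ on which $g\leq\delta/2$. Fix an integer $k$ with $1/\delta<k<2/\delta$ and set $q_p=(\mathbf{1}-g^{p})^{k^{p}}\in\Aa$ (the algebra is stable under such polynomial manipulations and contains $\mathbf{1}$). The standard Bernoulli inequality, applied exactly as in the proof of Lemma \ref{lem:SWfirstlemma}, gives on $V(G_0)$ componentwise
\[
q_p \;\geq\; \mathbf{1}-(k g)^{p}\;\geq\;\mathbf{1}-(k\delta/2)^{p}\;\xrightarrow[p\to\infty]{}\;\mathbf{1},
\]
while on $T$ componentwise $q_p\leq(k g)^{-p}\leq(k\delta)^{-p}\to 0$. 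Choosing $p$ large enough and letting $f=q_p$ yields the required step function.

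The construction is in fact a strict simplification of Lemma \ref{lem:SWfirstlemma}, so no genuinely new obstacle appears; the only delicate point is again the same as before, namely squaring and averaging the separating functions $g_{G,k,i}$ so as to simultaneously zero out \emph{all} $n$ coordinates at $G_0$ while keeping positivity on $T$. Since the self-separability hypothesis is never invoked (no $I/I^{c}$ split is needed), the present lemma is the ``easier sibling'' of Lemma \ref{lem:SWfirstlemma} and will later serve, together with Lemma \ref{lem:SWfirstlemma}, as one of the two basic building blocks for the construction of the step functions $f_k^{n,\ell}$ used in the proof of Theorem \ref{thm:SWequi}.
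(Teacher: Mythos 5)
Your proof is correct and matches the paper's intended approach exactly: the paper's own proof of this lemma is a one-line remark that it is "similar (but simpler) to that of Lemma~\ref{lem:SWfirstlemma}, without introduction of the interval $I$ and the function $h$," and you have spelled out precisely that simplification, taking the product over all $k\in[n]$ in the construction of $g_G$, dropping the self-separability-based $h$, and applying the same compactness and Bernoulli-inequality steps.
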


\begin{proof}
The proof is similar (but simpler) to that of Lemma \ref{lem:SWfirstlemma}, without introduction of the interval $I$ and the function $h$.
\end{proof}

An easy consequence of the above Lemma is the following.

\begin{lemma}\label{lem:SWsecondlemma_without_intv}
Let $A,B$ be two closed sets such that $B\cap \Permut(A) = \emptyset$. Then, for all $\varepsilon>0$, there exists $f \in \Aa$ such that:
\[
\begin{cases}
\forall G, &f(G) \in [0, 1] \\
\forall G \in B, & f(G) \geq 1-\varepsilon \\
\forall G \in A, &f(G) \leq \varepsilon
\end{cases}
\]
\end{lemma}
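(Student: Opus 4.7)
}

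The plan is to upgrade the pointwise construction of Lemma \ref{lem:SWfirstlemma_without_intv} into a uniform one over $B$ by a finite-cover argument, and then glue the resulting local step-functions together using only the algebra operations available in $\Aa$.

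First, I would note that the hypothesis $B \cap \Permut(A) = \emptyset$ is equivalent to $\Permut(G_0) \cap A = \emptyset$ for every $G_0 \in B$. Hence, with $T := A$ (a closed and therefore compact subset of $\Ggeq$), Lemma \ref{lem:SWfirstlemma_without_intv} applies at each point $G_0 \in B$: for any target accuracy $\eta > 0$ it furnishes a neighborhood $V(G_0)$ of $G_0$ and a function $f^\eta_{G_0} \in \Aa$ with values in $[0,1]$ such that $f^\eta_{G_0} \geq 1-\eta$ on $V(G_0)$ and $f^\eta_{G_0} \leq \eta$ on $A$. Since $B$ is closed in the compact set $\Ggeq$, it is itself compact, and the open cover $\{V(G_0)\}_{G_0 \in B}$ admits a finite subcover $V(G_1), \ldots, V(G_m)$.

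The second step is to combine the corresponding functions $f_i := f^\eta_{G_i}$ into a single element of $\Aa$. Because $\Aa$ is a subalgebra of $\Cceq(\Ggeq, d)$ containing $\mathbf{1}$ and the multiplication is componentwise, the natural candidate
\[
f := \mathbf{1} - \prod_{i=1}^m (\mathbf{1} - f_i)
\]
lies in $\Aa$ and takes values in $[0,1]$. On $A$ every factor satisfies $\mathbf{1} - f_i \geq (1-\eta)\mathbf{1}$, so Bernoulli gives $f \leq (1-(1-\eta)^m)\mathbf{1} \leq m\eta\,\mathbf{1}$; on $B$ any point $G$ lies in some $V(G_j)$, forcing $(\mathbf{1}-f_j)(G) \leq \eta$ and hence $f(G) \geq 1-\eta$. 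Setting $\eta := \varepsilon/m$ (and using $f \equiv 0$ in the trivial case $B = \emptyset$) delivers both requested inequalities.

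I do not expect a genuine obstacle here: the compactness of $B$ and the finite subcover are standard, and the only mildly subtle point is to replace the classical pointwise $\max$ of local step-functions (which is not an algebra operation) by the product-of-complements construction above. All the substantive work, namely the separability-based construction of the local step-functions themselves, is already encapsulated in Lemma \ref{lem:SWfirstlemma_without_intv}, so no further topological or algebraic input is needed.
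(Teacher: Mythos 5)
Your proof is correct and follows essentially the same route as the paper: cover $B$ by the $\eta$-independent neighborhoods supplied by Lemma \ref{lem:SWfirstlemma_without_intv}, extract a finite subcover by compactness, form the product of complements $\prod_i(\mathbf{1}-f_i)$, and then choose $\eta$ small in terms of $m$ (the paper writes $f=\prod_i(1-f_i)$ and takes $1-f$ at the end, which is the same thing). No gap.
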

\begin{proof}
Let $G\in B$. By hypothesis, $A\cap \Permut(G) = \emptyset$, so by Lemma \ref{lem:SWfirstlemma_without_intv} there exists $V(G)$ a neighborhood of $G$ such that for all $\varepsilon>0$ there exists $f\in \Aa$ satisfying: $0\leq f\leq 1$, $f \geq 1-\varepsilon$ on $V(G)$, and $f\leq \varepsilon$ on $A$. By compacity of $B$, there is a finite number of $G_1,\ldots,G_m \in B$ such that $B \subset \cup_{i=1}^m V(G_i)$. Denote by $f_i$ the associated functions produced by Lemma \ref{lem:SWfirstlemma_without_intv} for some $\varepsilon'>0$, and denote $f = \prod_i (1-f_i)$. We have that $f \leq \varepsilon'$ on $B$ and $f \geq (1-\varepsilon')^m$ on $A$. Hence by choosing appropriately $\varepsilon'$ (note that $\varepsilon'$ is authorized to depend on $m$), we obtain a function $f$ such that $f\leq \varepsilon$ on $B$ and $f \geq 1-\varepsilon$ on $A$, and taking $1-f$ concludes the proof.
\end{proof}

We can now show Lemma \ref{lem:step-func}.

\begin{proof}[Proof of Lemma \ref{lem:step-func}]
Let $G\in B\subset \Ggeqn{n}$. By hypothesis, $G \notin \Ggeqn{(n)}(\Permut_I)$ and $A\cap \Permut(G) =\emptyset$, so by Lemma \ref{lem:SWfirstlemma} there exists $V(G) \subset \RR^{n^\kinput}$ a neighborhood of $G$ such that for all $\varepsilon>0$ there exists $f\in \Aa$ satisfying: 
\begin{align*}
&0\leq f\leq 1 \\
&[f]_I \geq 1-\varepsilon \text{ and } [f]_{I^c}\leq \varepsilon \text{ on } V(G) \\
&f\leq \varepsilon \text{ on } A\, .
\end{align*}
By compacity of $B$, there is a finite number of $G_1,\ldots,G_m \in B$ such that $B \subset \cup_{i=1}^m V(G_i)$. For some $\varepsilon>0$ that we will choose later, denote the associated functions $f_1,\ldots,f_m$ (note that the $V(G_i)$ do not depend on $\varepsilon$, but the $f_i$ do).

We remark that we cannot just consider the function $\prod_i f_i$ and conclude: indeed, on each $V(G_i)$ only $f_i$ will satisfy $[f_i]_I\geq 1-\varepsilon$, and the others $f_j$ are not guaranteed to be lower bounded. For the same reason, we cannot consider $\frac{1}{m}\sum_i f_i$ either, due to the requirement that $[f]_{I^c}\leq \varepsilon$ on $B$. We need to introduce auxiliary functions $\tilde f_i$ such that we are guaranteed that for each $j\neq i$, $[\tilde f_j]_I \geq 1-\varepsilon$ on $V(G_i)$, and we can conclude with $\prod_i (f_i + \tilde f_i)$. We will construct such functions with Lemma \ref{lem:SWsecondlemma_without_intv}. A final difficulty is that $V(G_i)$ are open sets, while Lemma \ref{lem:SWsecondlemma_without_intv} can only work with closed sets.

Hence, by continuity, consider the neighborhoods $V'(G_i) \subset \RR^{n^\kinput}$ such that 
\begin{align*}
&\overline{V(G_i)} \subset V'(G_i) \\
&[f_i]_I \geq 1-2\varepsilon \text{ and } [f_i]_{I^c}\leq 2\varepsilon \text{ on } V'(G_i)\, .
\end{align*}
Note that the $V(G_i)$ do not depend on $\varepsilon$, but the $V'(G_i)$ do.

Then, for all $i\in [n]$ consider the closed sets $A_i = A \cup \overline{V(G_i)}$ and $B_i = B \backslash \Permut(V'(G_i))$. By construction of $B_i$ and hypothesis on $A$, we have indeed that $\Permut(A_i) \cap B_i = \emptyset$, since $\overline{V(G_i)} \subset V'(G_i)$. Applying Lemma \ref{lem:SWsecondlemma_without_intv}, we obtain a function $\tilde f_i \in \Aa$ such that $\tilde f \leq \varepsilon$ on $A_i$ and $\tilde f \geq 1-\varepsilon$ on $B_i$.

Finally, consider the following function: $f = \frac{1}{2^m}\prod_i (f_i + \tilde f_i)$. Take any $G \in B$. Consider the index $i$ such that $G \in V(G_i)$. We have $[f_i(G) + \tilde f_i(G)]_I \geq 1-\varepsilon$ by definition of $f_i$ and $[f_i(G) + \tilde f_i(G)]_{I^c} \leq 2\varepsilon$ by definition of $f_i$ and $\tilde f_i$ and the fact that $G \in V(G_i)\subset A_i$. For any $j\neq i$, we have the following: either $G \in \Permut(V'(G_j))$, in which case, by equivariance of $f_j$ and the fact that $G \notin \Ggeq(\Permut_I)$, we have $[f_j(G)]_I \geq 1-2\varepsilon$; or $G \in B_j$, in which case $[\tilde f_j(G)]_I \geq 1-2\varepsilon$. Overall, we obtain that 
\begin{align*}
&[f]_I \geq \frac{1}{2^m}(1-2\varepsilon)^m \text{ and } [f]_{I^c} \leq \frac{1}{2^m}2\varepsilon \text{ on } B \\
&f \leq \frac{1}{2^m}2\varepsilon \text{ on } A\, .
\end{align*}
We conclude by choosing $\varepsilon$ such that $(1-2\varepsilon)^m > 2\varepsilon$ and proceeding similarly to the end of the proof of Lemma \ref{lem:SWfirstlemma}, resorting to Bernoulli's inequality.
\end{proof}

We are now ready to prove Theorem \ref{thm:SWequi}.

\begin{proof}[Proof of Theorem \ref{thm:SWequi}]
Fix $f\in \Cceq$ a continuous equivariant function and $\varepsilon>0$. Our goal is to find a function $g\in \Aa$ such that for all $G\in \Ggeq$, $\norm{F(G)-f(G)}_\infty\leq \varepsilon$. Since $\Ggeq$ is compact, $f$ is bounded, and since we can add constants to $g$, without lost of generality we assume that $0< f < f_{\max}$ on $\Ggeq$.

We first restrict the space to the compact set where the coordinates of $F$ are ordered:
\[
\Gg_f \eqdef \bigcup_{n=1}^{n_{\max}} \Gg_f^{(n)} \qwhereq \Gg_f^{(n)}\eqdef \enscond{G\in \Ggeqn{n}}{f_1(G) \geq f_2(G) \geq \ldots \geq f_n(G)}
\]
Indeed, by equivariance of $f$, every graph $G \in \Ggeq$ has a permuted representation in $\Gg_f$. Hence proving the uniform approximation of $f$ on $\Gg_f$ is sufficient to prove it on the whole set $\Ggeq$.

Now, denote $K\in \NN$ an integer such that $(K-1)\varepsilon \leq f_{\max} \leq K\varepsilon$. For $k=1,\ldots, K$, $n=1,\ldots,n_{\max}$ and $\ell=1,\ldots,n$, define the following compact set:
\begin{align*}
A_k^{n,\ell} &= \enscond{G\in \Gg^{(n)}_f}{f_\ell(G) - f_{\ell+1}(G) \leq (k-1/3)\varepsilon} \cup \bigcup_{n' \neq n} \Gg^{(n')}_f \\
B_k^{n,\ell} &= \enscond{G\in \Gg^{(n)}_f}{f_\ell(G) - f_{\ell+1}(G) \geq (k+1/3)\varepsilon} 
\end{align*}
where we use the convention that for $G \in \RR^{n^\kinput}$, $f_{n+1}(G)=0$. Note that $A_{k}^{n,\ell} \subset A_{k+1}^{n,\ell}$, and $B_{k+1}^{n,\ell} \subset B_{k}^{n,\ell}$. For $\ell=1,\ldots,n$ we denote the integer interval $I_\ell = [1,\ell]$.

Let us first show that $A_k^{n,\ell} \cap \Permut(B_k^{n,\ell}) = \emptyset$ and $B_k^\ell \cap \Ggeqn{n}(\Permut_{I_\ell}) = \emptyset$, so that we can apply Lemma \ref{lem:step-func}. Consider $G \in B_k^{n,\ell}$, $G' \in A_k^{n,\ell}$, $y=F(G)$ and $y'=F(G')$. If $G' \in \Ggeqn{n'}$ for $n' \neq n$, $G$ and $G'$ are obviously not permutation of one another. If $G' \in \Ggeqn{n}$, the coordinates of both $y$ and $y'$ are sorted, and we have $y'_{\ell} - y'_{\ell+1} > y_{\ell} - y_{\ell+1}$ (again with the convention that $y_{n+1},y'_{n+1}=0$). It is therefore impossible for $y$ and $y'$ to be permutation of one another, and thus $A_k^{n,\ell} \cap \Permut(B_k^{n,\ell}) = \emptyset$. Furthermore, for any $\ell<n$, we have $y_i \geq y_\ell > y_{\ell+1} \geq y_j$ for any $i\leq \ell < j$, and therefore there is no self-permutation of $y$ that exchange an index before $\ell$ and one after. In other words, we have $B_k^{n,\ell} \cap \Ggeqn{n}(\Permut_{I_\ell}) = \emptyset$.

Then, for all $k$ and $n$, for $\ell<n$, by applying Lemma \ref{lem:step-func} for $\varepsilon>0$ we obtain $f_k^{n,\ell}$ such that $0\leq f_k^{n,\ell} \leq 1$, $[f_k^{n,\ell}]_{I_\ell} \geq 1-\varepsilon$ and $[f_k^{n,\ell}]_{I_\ell^c} \leq \varepsilon$ on $B_k^{n,\ell}$, and $f_k^{n,\ell} \leq \varepsilon$ on $A_k^{n,\ell}$. Similarly, by applying Lemma \ref{lem:SWsecondlemma_without_intv},  we obtain functions $f_k^{n,n} \in \Aa$ such that $0\leq f_k^{n,n} \leq 1$, $f_k^{n,n} \geq 1-\varepsilon$ on $B_k^{n,n}$ and $f_k^{n,n} \leq \varepsilon$ on $A_k^{n,n}$.  Finally, we define $g = \sum_{k,n} \sum_{\ell=1}^n f_k^{n,\ell}$. 

Now, take any $G \in \Gg_f$, denote by $n$ its number of nodes. For every $\ell\leq n$, denote $k_\ell$ such that $k_\ell-\frac{2}{3} \leq \frac{f_\ell(G) - f_{\ell+1}(G)}{\varepsilon} \leq k_{\ell}+\frac{2}{3}$. By summing these equations, we obtain for all $q$:
\begin{align}\label{eq:bound_F_ell}
\varepsilon\sum_{\ell=q}^n k_\ell - \frac{2n\varepsilon}{3} \leq \varepsilon\sum_{\ell=q}^n \pa{k_\ell - \frac{2}{3}} \leq (f(G))_q \leq \varepsilon\sum_{\ell=q}^n \pa{k_\ell + \frac{2}{3}} \leq \varepsilon\sum_{\ell=q}^n k_\ell + \frac{2n\varepsilon}{3}
\end{align}
We are going to show that $g$ approximates these bounds. For all $\ell$, we have $G \in A_{k_\ell+1}^{n,\ell} \cup B_{k_\ell-1}^{n,\ell}$. Moreover, it is obvious that $G\in A_k^{n',\ell}$ for all $n' \neq n$, and all $k,\ell$. By construction of $f_k^{n,\ell}$, we have:
\begin{align*}
&\forall n' \neq n, \forall k,\ell, f_k^{n',\ell}(G) \leq \varepsilon &\text{since $G \in  A_k^{n',\ell}$} \\
&\forall \ell\leq n, \forall k\geq k_\ell+1,\quad f_k^{n,\ell}(G) \leq \varepsilon &\text{since $G \in  A_{k_\ell+1}^{n,\ell} \subset A_{k}^{n,\ell} $} \\
&\forall \ell\leq n, \forall k\leq k_\ell -1,\begin{cases} [f_k^{n,\ell}(G)]_{[1,\ell]} &\geq 1-\varepsilon \\
[f_k^{n,\ell}(G)]_{[\ell+1,n]} &\leq \varepsilon
\end{cases}  &\text{since $G \in  B_{k_\ell-1}^{n,\ell} \subset B_{k}^{n,\ell} $}
\end{align*}
Then, we decompose
\begin{equation}\label{eq:decompose_f_ell}
[f(G)]_q = \varepsilon\pa{\sum_{n'\neq n} \sum_{\ell=1}^{n'} \sum_{k} [f_k^{n',\ell}(G)]_q +  \sum_{\ell=1}^{q-1} \sum_k [f_{k}^{n,\ell}(G)]_q+\sum_{\ell=q}^n \sum_k [f_k^{n,\ell}(G)]_q}
\end{equation}
By what precedes the first term is bounded by
\[
0 \leq \sum_{n'\neq n} \sum_{\ell=1}^{n'} \sum_{k} [f_k^{n',\ell}(G)]_q \leq n_{\max}^2 K \varepsilon < n_{\max}^2 f_{\max}
\]
For the second term, we have
\[
0 \leq \sum_{\ell=1}^{q-1} \sum_k [f_k^{n,\ell}(G)]_q \leq (q-1)(1 + (K-1)\varepsilon) < n_{\max}(1+f_{\max})
\]
since for all $\ell\leq q-1$ and any $k\neq k_q$, we have $[f_k^{n,\ell}(G)]_q\leq \varepsilon$. Finally, for the third term:
\begin{align*}
\sum_{\ell=q}^n \sum_k [f_k^{n,\ell}(G)]_q &\geq (1-\varepsilon)\sum_{\ell=q}^n (k_\ell-1) \geq \sum_{\ell=q}^n k_\ell - n_{\max} (f_{\max}+1), \\
\sum_{\ell=q}^n \sum_k [f_k^{n,\ell}(G)]_q &\leq \sum_{\ell=q}^n (k_\ell + 1 + (K-k_\ell-1)\varepsilon) \leq \sum_{\ell=q}^n k_\ell + n_{\max}(1+f_{\max})
\end{align*}
Hence, combining \eqref{eq:bound_F_ell} and \eqref{eq:decompose_f_ell} with the bounds above we obtain
\[
-\varepsilon\pa{\frac{2n_{\max}}{3} + n_{\max}(f_{\max}+1)}  \leq [f(G)]_q - [F(G)]_q \leq \varepsilon\pa{\frac{2n_{\max}}{3} + 2n_{\max}(1+f_{\max}) + n_{\max}^2 f_{\max}}
\]
Hence appropriately choosing $\varepsilon$ concludes the proof of the theorem.
\end{proof}

\section{Additional technical lemma}\label{app:add}

The next technical lemma is used in proving separation of points.
\begin{lemma}\label{lem:lemmapermut}
Let $a_{i_1,\ldots,i_\kinput}, a'_{i_1,\ldots,i_\kinput} >0$ for $1\leq i_1,\ldots,i_\kinput \leq n$ be $n^\kinput$ positive numbers such that $\{a_{i_1,\ldots,i_\kinput}\}$ and $\{a'_{i_1,\ldots,i_\kinput}\}$ are the same multisets. Let $\Permut', \Permut'' \subset \Permut_n$ be sets of permutations such that $\Id \in \Permut'$ and $\abs{\Permut'} = \abs{\Permut''}$. If, for every set of integers $k_{i_1,\ldots,i_\kinput}>0$, we have
\begin{equation}
\label{eq:lemmapermut}
\sum_{\permut\in\Permut'} \prod_{i_1,\ldots,i_\kinput} a_{\permut(i_1),\ldots,\permut(i_\kinput)}^{k_{i_1,\ldots,i_\kinput}} = \sum_{\permut\in\Permut''} \prod_{i_1,\ldots,i_\kinput} (a'_{\permut(i_1),\ldots,\permut(i_\kinput)})^{k_{i_1,\ldots,i_\kinput}}\, ,
\end{equation}
then there exists a permutation $\permut \in \Permut''$ such that $a_{i_1,\ldots,i_\kinput} = a'_{\permut(i_1),\ldots,\permut(i_\kinput)}$.
\end{lemma}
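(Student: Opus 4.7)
}

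The plan is to reinterpret the hypothesis as a moment-type identity for a finitely supported signed measure on $\RR_{>0}^{n^\kinput}$, and then appeal to linear independence of multiplicative characters of the semigroup $\NN_{>0}^{n^\kinput}$.

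To each $\permut \in \Permut_n$ I would associate the ``permuted tuple'' $a_\permut \in \RR_{>0}^{n^\kinput}$ with entries $(a_\permut)_{i_1,\ldots,i_\kinput} = a_{\permut(i_1),\ldots,\permut(i_\kinput)}$, and analogously $a'_\permut$ using $a'$. With this notation, \eqref{eq:lemmapermut} says that for every multi-exponent $k$ of strictly positive integers,
\[
\sum_{\permut \in \Permut'} \prod_{I} (a_\permut)_I^{k_I} \;=\; \sum_{\permut \in \Permut''} \prod_{I} (a'_\permut)_I^{k_I},
\]
where $I$ ranges over multi-indices $(i_1,\ldots,i_\kinput)$. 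Equivalently, the finitely supported signed measure $\mu = \sum_{\permut \in \Permut'} \delta_{a_\permut} - \sum_{\permut \in \Permut''} \delta_{a'_\permut}$ on $\RR_{>0}^{n^\kinput}$ annihilates every monomial $x \mapsto \prod_I x_I^{k_I}$ with $k_I > 0$.

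The core step is to conclude $\mu = 0$ as a signed measure. For that I would prove that the multiplicative characters $\chi_v \colon k \mapsto \prod_I v_I^{k_I}$, for distinct $v \in \RR_{>0}^{n^\kinput}$, are linearly independent as functions on $\NN_{>0}^{n^\kinput}$. The argument is an induction on the number $n^\kinput$ of coordinates: in the base case one has finitely many distinct positive reals $w_j$ with $\sum_j c_j w_j^t = 0$ for all positive integer $t$, and the $r \times r$ matrix $(w_j^t)_{t=1,\ldots,r}$, which factors as $\mathrm{diag}(w_j)$ times a Vandermonde matrix, is invertible, forcing the $c_j$ to vanish; the inductive step fixes all but one coordinate of $k$, groups the (finitely many) support points of $\mu$ by the remaining coordinate of $v$, applies the one-dimensional fact to peel off that coordinate, and invokes the inductive hypothesis on the remaining $n^\kinput - 1$ coordinates (the remaining truncated vectors within each group stay distinct). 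Applied to our $\mu$, this forces every signed multiplicity to be zero, so the multisets $\{a_\permut\}_{\permut \in \Permut'}$ and $\{a'_\permut\}_{\permut \in \Permut''}$ coincide in $\RR_{>0}^{n^\kinput}$.

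Finally, since $\Id \in \Permut'$, the tuple $a_\Id$, whose entries are exactly $a_{i_1,\ldots,i_\kinput}$, must appear in $\{a'_\permut\}_{\permut \in \Permut''}$, so there is some $\permut \in \Permut''$ with $a_\Id = a'_\permut$; unwinding the definition this is exactly the sought equality $a_{i_1,\ldots,i_\kinput} = a'_{\permut(i_1),\ldots,\permut(i_\kinput)}$. The main obstacle will be establishing the linear independence on the restricted semigroup $\NN_{>0}^{n^\kinput}$ rather than on all of $\NN^{n^\kinput}$, but the Vandermonde induction above handles this cleanly; the assumption that $\{a_{i_1,\ldots,i_\kinput}\}$ and $\{a'_{i_1,\ldots,i_\kinput}\}$ form the same multiset is not essential to this argument and seems to be retained mostly for consistency with how the lemma is invoked in Lemmas~\ref{lem:separation} and~\ref{lem:separation_equi}.
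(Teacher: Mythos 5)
Your proof is correct, but it takes a genuinely different route from the paper's. Where the paper, for each fixed multi-index $k$, uses a ``max--divide--iterate'' trick on the power sums to deduce that the multisets of scalar products $\{A_\permut\}_{\permut\in\Permut'}$ and $\{A'_\permut\}_{\permut\in\Permut''}$ coincide, and then in a second step fixes a specific monotone choice of $k$ and invokes the rearrangement inequality on logarithms to pin down the right $\permut\in\Permut''$, you instead establish the strictly stronger statement that the multisets of \emph{tuples} $\{a_\permut\}_{\permut\in\Permut'}$ and $\{a'_\permut\}_{\permut\in\Permut''}$ coincide in $\RR_{>0}^{n^\kinput}$, and you obtain this directly from linear independence of the multiplicative characters $k\mapsto\prod_I v_I^{k_I}$ on $\NN_{>0}^{n^\kinput}$, proved by a clean Vandermonde-plus-induction argument. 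The conclusion then drops out immediately from $\Id\in\Permut'$, with no rearrangement step at all. A notable byproduct, which you correctly identify, is that your argument never uses the hypothesis that $\{a_{i_1,\ldots,i_\kinput}\}$ and $\{a'_{i_1,\ldots,i_\kinput}\}$ form the same multiset of entries, whereas the paper's rearrangement step relies on it essentially (it is what guarantees the maximum of $\sum_I k_I\log a'_{\permut(I)}$ over orderings equals the value realized by $a$). Since that hypothesis is in any case an a posteriori consequence of the lemma's conclusion, your proof shows it can be deleted from the statement. Both approaches are valid; yours buys generality and is easier to verify rigorously, since the paper leaves the max-subtraction iteration and the treatment of ties in the rearrangement step somewhat implicit, while the paper's buys elementarity (no invocation of character independence, just Vandermonde in one variable and a sorting argument).
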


\begin{proof}
The proof is ultimately based on the simple fact that for two vectors $x,y \in \RR^p$, the inner product $\ps{x}{y}$ is maximum if the elements of $x$ and $y$ have the same ordering (the largest element $x_i$ is multiplied to the largest element $y_i$, and so on\footnote{If there are $i,j$ such that $x_i < x_j$ and $y_j < y_i$, then we can form $y'$ by swapping $y_i$ and $y_j$, and we have $x^\top y' - x^\top y = x_i y_j + x_j y_i - x_i y_i - x_jy_j = (x_j - x_i)(y_i - y_j) >0$, hence the swapping strictly increases the scalar product, which is maximal when $x$ and $y$ are ordered in the same fashion.}).

Let us begin by fixing any $k_{i_1,\ldots,i_\kinput}$ and showing that there is a bijection $\phi:\Permut' \to \Permut''$ between the two considered sets of permutations such that for all $\permut \in \Permut'$, $\prod_{i_1,\ldots,i_\kinput} a_{\permut(i_1),\ldots,\permut(i_\kinput)}^{k_{i_1,\ldots,i_\kinput}} = \prod_{i_1,\ldots,i_\kinput} (a'_{\phi(\permut)(i_1),\ldots,\phi(\permut)(i_\kinput)})^{k_{i_1,\ldots,i_\kinput}}$, that is, there is a bijection between each additive term of \eqref{eq:lemmapermut}. Denoting $A_\permut = \prod_{i_1,\ldots,i_\kinput} a_{\permut(i_1),\ldots,\permut(i_\kinput)}^{k_{i_1,\ldots,i_\kinput}}$ and similarly $A'_\permut$ for $a'$, a consequence of \eqref{eq:lemmapermut} is that
\begin{equation}\label{eq:lemmapermutproof}
\sum_{\permut \in \Permut'} A_\permut^k = \sum_{\permut \in \Permut''} (A'_\permut)^k
\end{equation}
for all $k$. Hence, considering the maximum elements $\max_\permut A_\permut$ and $\max_{\permut'}A'_{\permut'}$ (with arbitrary choice in case of multiple maxima): if they are different, by dividing the equation by the largest of the two and letting $k\to \infty$, we have that one side goes to $0$ while the other tends to a positive constant. Hence the maximal elements are the same, we can substract them from the equation and reiterate. Hence we have proven that there is indeed a bijection between the $A_\permut$ and $A'_{\permut'}$.

Then, considering the multiset of $n^\kinput$ numbers $\{a_{i_1,\ldots,i_\kinput}\}$, pick $k_{i_1,\ldots,i_\kinput}>0$ in the same order than these numbers: $a_{i_1,\ldots,i_\kinput} \leq a_{i_1',\ldots,i_\kinput'}$ implies $k_{i_1,\ldots,i_\kinput} \leq k_{i_1',\ldots,i_\kinput'}$. Using the previously proved property, consider the permutation $\permut \in \Permut''$ (which depends on the $k_{i_1,\ldots,i_\kinput}$) such that
\[
\prod_{i_1,\ldots,i_\kinput} a_{i_1,\ldots,i_\kinput}^{k_{i_1,\ldots,i_\kinput}} = \prod_{i_1,\ldots,i_\kinput} (a'_{\permut(i_1), \ldots, \permut(i_\kinput)})^{k_{i_1,\ldots,i_\kinput}}
\]
(that is, we have isolated the term corresponding to $\Id \in \Permut'$ in the l.h.s. of \eqref{eq:lemmapermutproof} and located the component in the r.h.s. that is in bijection with it). Then, remembering that the $a_{i_1,\ldots,i_\kinput}$ and $a'_{i_1,\ldots,i_\kinput}$ are taken from the same pool of real numbers, we claim that having the $a'_{\permut(i_1),\ldots,\permut(i_\kinput)}$ ordered as the $k_{i_1,\ldots,i_\kinput}$ is the only way to reach the maximum value (reached by the $a_{i_1,\ldots,i_\kinput}$) among all orderings of the $\{a'_{\permut(i_1),\ldots,\permut(i_\kinput)}\}$: indeed, for that, take the logarithm of the equation above, and we use the fact that the scalar product between two vectors formed by a fixed set of elements is maximal when they are ordered in the same fashion. Hence we have proven that: $a_{i_1,\ldots,i_\kinput} \leq a_{i'_1,\ldots,i'_\kinput}$ implies $k_{i_1,\ldots,i_\kinput} \leq k_{i'_1,\ldots,i'_\kinput}$ which implies $a'_{\permut(i_1),\ldots,\permut(i_\kinput)} \leq a'_{\permut(i'_1),\ldots,\permut(i'_\kinput)}$. Since the $a,a'$ are drawn from the same pool of numbers, we have proven that $a_{i_1,\ldots,i_\kinput} = a'_{\permut(i_1),\ldots,\permut(i_\kinput)}$, which concludes the proof.
\end{proof}

\end{document}